\documentclass{article}
\usepackage{ijcai26}

\usepackage{times}
\usepackage{soul}
\usepackage{url}
\usepackage[hidelinks]{hyperref}
\usepackage[utf8]{inputenc}
\usepackage[small]{caption}
\usepackage{graphicx}
\usepackage{amsmath}
\usepackage{amsthm}
\usepackage{booktabs}
\usepackage{algorithm}
\usepackage[switch]{lineno}

\usepackage{mathtools}
\usepackage{amsthm}
\usepackage{amssymb}
\usepackage{xspace}
\usepackage{siunitx}
\usepackage[noend]{algpseudocode}
\usepackage{tabularx}
\usepackage{makecell}
\usepackage{subcaption}

\newtheorem{theorem}{Theorem}

\newtheorem{proposition}[theorem]{Proposition}

\newtheorem{definition}{Definition}

\newcommand{\algname}{\textsc{Woodelf}\xspace} 
\newcommand{\algnamenew}{\textsc{Woodelf++}\xspace} 
\xspaceaddexceptions{'’}
\newcommand{\sckl}{scikit-learn\xspace}

\newcommand{\PDIV}{\mbox{\textit{PDIV}}}
\newcommand{\PDV}{\mbox{\textit{PDV}}}
\newcommand{\CPDV}{\mbox{\textit{CPDV}}}
\newcommand{\aopdiv}{Any-Order-PDIVs\xspace}

\usepackage{tcolorbox}

\title{\algnamenew: A Fast and Unified Partial Dependence Plot Algorithm for Decision Tree Ensembles}

\author{
Ron Wettenstein$^1$
\and
Alexander Nadel$^2$\and
Udi Boker$^1$\\
\affiliations
$^1$Reichman University, Herzliya, Israel\\
$^2$Faculty of Data and Decision Sciences, Technion, Haifa, Israel\\
\emails
ron.wettenstein@post.runi.ac.il,
alexandernad@technion.ac.il
}

\begin{document}

\maketitle

\begin{abstract}
Partial Dependence Plots (PDPs) visualize how changes in a single feature affect the average model prediction. They are widely used in practice to interpret decision tree ensembles and other machine learning models. Joint-PDPs extend this idea to pairs of features, revealing their combined effect.

Partial Dependence Interaction Values (PDIVs) measure feature interactions. The \aopdiv task computes these interactions for every feature subset across all rows of the dataset.

We introduce \algnamenew, a unified and efficient approach for computing all these useful explainability tools on decision tree ensembles, building on 
\algname, an algorithm for efficient SHAP computation. By deriving suitable metrics over pseudo-Boolean functions, \algnamenew can compute PDPs (exact and approximate), Joint-PDPs, and \aopdiv in a unified framework.

Our method delivers substantial complexity improvements over the state of the art, including an exponential gain for \aopdiv. Additionally, we introduce and efficiently compute \emph{Full PDPs}, which leverage the model’s split thresholds to faithfully capture its behavior across all possible feature values.

\algnamenew is implemented in pure Python\footnote{\algnamenew was migrated to the \textsc{Woodelf} GitHub package: \url{https://github.com/ron-wettenstein/woodelf} \\ This paper is an extended version of the work to appear at IJCAI-26.} and supports GPU acceleration. 
On a dataset with \num{400000} rows, \algnamenew computes PDP and Joint-PDP up to $6\times$ faster than the state of the art and up to five orders of magnitude faster than \sckl. For \aopdiv, the gap is even larger: \algnamenew computes all interaction values in 5 minutes, while the state of the art is estimated to require over \num{1000000} years.

\end{abstract}
\section{Introduction}

As decision tree ensembles are widely used in real-world applications~\cite{decision_trees_are_usefull}, understanding their predictions is crucial for trust and decision-making~\cite{decision_trees_trustworthiness,should_i_trust_you}.
This paper accelerates the computation of three widely used attribution metrics for decision tree ensembles: Partial Dependence Plot, Joint Partial Dependence Plot, and Partial Dependence Interaction Value. 

\begin{figure}[t]
  \centering
  \begin{subfigure}{0.49\columnwidth}
    \centering
    \includegraphics[width=\linewidth]{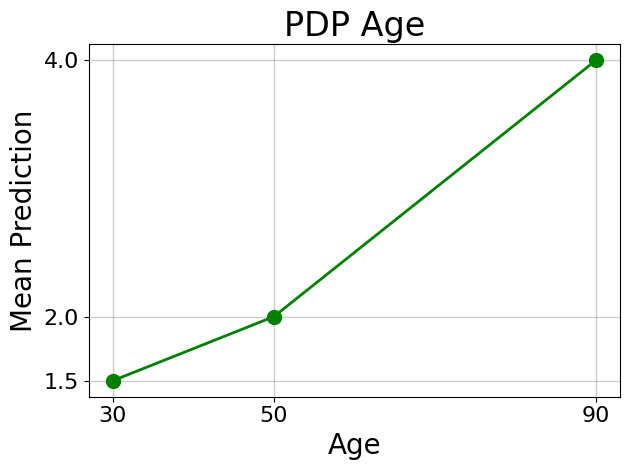}
    \caption{\label{fig:pdp_age}PDP on \textit{age}, include three possible values: 30, 50 and 90.}
  \end{subfigure}
  \hfill
  \begin{subfigure}{0.49\columnwidth}
    \centering
    \includegraphics[width=\linewidth]{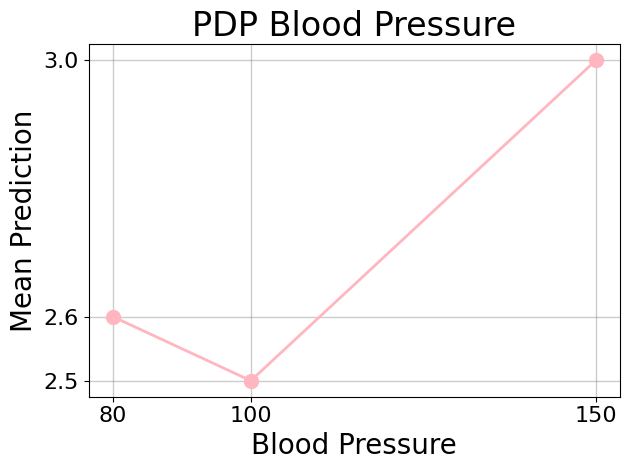}
    \caption{\label{fig:pdp_blood_pressure}PDP on \textit{blood pressure}, with possible values 80, 100 and 150.}
  \end{subfigure}
  \caption{\label{fig:pdp_example} PDP examples.}
\end{figure}

\begin{figure}[t]
    \centering
    \includegraphics[width=0.8\linewidth]{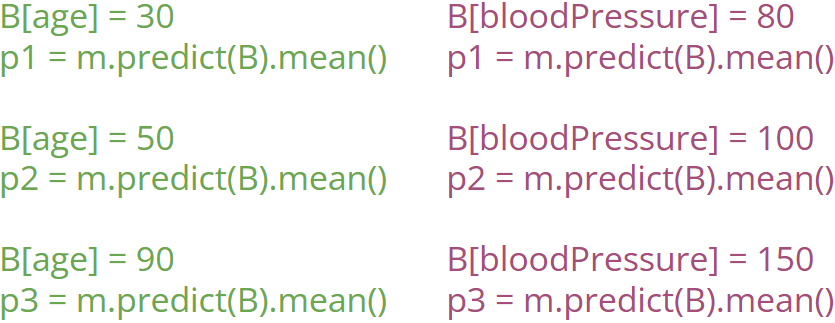}
    \caption{Naive computation of the PDVs required to plot the PDPs in Fig.~\ref{fig:pdp_example}, carried out according to Def.~\ref{def:pdv}.}
    \label{fig:brute_example}
\end{figure}

\subsection{Partial Dependence Plot (PDP)}
\label{sec:intro_PDP}

A PDP~\cite{GreedyFunctionApproximation} illustrates how a model’s average prediction varies with changes in a specific feature. For example, Fig.~\ref{fig:pdp_age} shows that the average prediction increases with \textit{age}. PDPs are widely used in many scientific domains such as healthcare~\cite{GA2M_healthcare}, ecology~\cite{birds_PDP,fish_PDP}, and economics~\cite{PDP_in_housing}.

A PDP is a line plot of several PDVs (Def.~\ref{def:pdv}). 

\begin{definition}[Partial Dependence Value (PDV)] 
\label{def:pdv}
A PDV measures the average prediction of a model $M$ when a feature $f_j$ is fixed to a given value $v$. 
    For a dataset $B$:
\begin{equation}
\label{PDV_formula}
    \PDV_{v, f_j} = 
    \frac{1}{|B|}  \sum_{b \in B} M \left(  \begin{cases} 
    v & \text{if } f_i=f_j  \\
    b[f_i] & \text{otherwise }
    \end{cases} \right)
\end{equation}
\end{definition}

To construct a PDP for feature $f_j$, we select $k$ possible values of $f_j$, compute their PDVs, and plot them. 
The selected values are typically sampled from $B[f_j]$ or evenly spaced across its range.

For each feature $f_a$ and samples $\{a_1, \dots a_k\}$:
\[
\text{PDP requires } \forall_{1\le i \le k}: PDV_{\{a_i\}, \{f_a\}}
\]

The naive way to construct these plots is to compute each PDV separately using Formula~\ref{PDV_formula} (see Fig.~\ref{fig:brute_example}). The overall complexity of computing PDPs for all features using this method is $O(kfnTD)$ (see Fig.~\ref{fig:notations} for notation). 
This naive approach remains the standard in widely used frameworks such as \sckl~\cite{scikit-learn} and \emph{PiML}~\cite{sudjianto2023piml}. 
\begin{figure}[t]
    \centering
    \begin{tcolorbox}
        \label{box:notation}
        $\boldsymbol{k}$: \#selected values to plot;
        $\boldsymbol{f}$: \#features; 
        $\boldsymbol{F}$: Max number of features in a single tree; 
        $\boldsymbol{n}$: \#samples (rows); 
        $\boldsymbol{T}$: \#trees; 
        $\boldsymbol{D}$: tree depth; 
        $\boldsymbol{L}$: leaves per tree. 
    \end{tcolorbox}
    \caption{\label{fig:notations} Notations.}
\end{figure}

~\cite{GreedyFunctionApproximation} also introduced a faster approach that approximates the PDVs, which we discuss in detail in Sec.~\ref{sec:faster_pdp}.

Several extensions of PDPs have been proposed. Individual Conditional Expectation (ICE)~\cite{ICE} plots can be viewed as PDPs computed for a single data row, revealing variation in feature effects across samples. Accumulated Local Effects (ALE)~\cite{ALE} modifies the $PDV$ computation by focusing on rows with feature values similar to $v$. A comparative analysis of PDPs, ALE, and Shapley values is presented in~\cite{PDP_ALE_SHAP_compare}. Additionally, iPDP~\cite{iPDP} handles dynamic models that evolve over time, and StratPD~\cite{StratPD} takes a different approach: instead of interpreting the model, it generates plots that directly explain the training data.

\subsection{Joint Partial Dependence Plot (Joint-PDP)}
\label{sec:intro_joint_PDP}

The concept of Partial Dependence Values extends naturally to feature sets: instead of fixing a single feature, we fix a set of features to chosen values - see Def.~\ref{def:pdv_for_sets}.

\begin{definition}[Partial Dependence Value (PDV) on sets] 
\label{def:pdv_for_sets}
PDV on a subset of features $F=\{f_1, \dots, f_h\}$, set of values $V=\{v_1, \dots, v_h\}$ (write $V[f_i] = v_i$ for the value of $f_i$) and a dataset $B$ is defined as follows:
\begin{equation}
\label{PDV_set_formula}
    \PDV_{V, F} = 
    \frac{1}{|B|}  \sum_{b \in B} M \left(  \begin{cases} 
    V[f_i] & \text{if } f_i \in F  \\
    b[f_i] & \text{otherwise }
    \end{cases} \right)
\end{equation}

\end{definition}


To visualize the joint contribution of a feature pair, it is useful to plot their PDVs. For features $f_a$, $f_b$ with sampled values $\{a_1, \dots, a_h\}$ and 
$\{b_1, \dots, b_h\}$, Joint-PDP computes:
\[
PDV_{\{a_i, b_j\}, \{f_a, f_b\}} \quad \text{for all } 1\le i, j \le h
\]

The naive approach, also used by \sckl, computes the $PDV$ for every feature pair and value by fixing the pair to specific values and evaluating the model prediction (similar to Fig.~\ref{fig:brute_example}, but fixing two features instead of one). This results in a time complexity of $O(k^2 f^2 nTD)$.

\subsection{Partial Dependence Interaction Value (PDIV)}
\label{sec:intro_PDIV}

It is often useful to decompose the joint effect of features into their individual effects and the effect of their interactions. PDIVs (Def.~\ref{def:pdiv}) do exactly this. 

\begin{definition}[Partial Dependence Interaction Value (PDIV)] 
\label{def:pdiv}
A PDIV quantifies the interaction effect among the features $F = \{f_1, \dots, f_h\}$ 
when their values are fixed to $V = \{v_1, \dots, v_h\}$ (with $V[f_i] = v_i$). 
PDIVs on dataset $B$ are defined as follows:
\begin{equation}
\label{PDIV_formula}
    \PDIV_{V, F} = \sum_{S \subseteq F} (-1)^{|F|-|S|} \cdot \PDV_{(V[f_i])_{\forall f_i \in S},S}
\end{equation}
\end{definition}
\begin{equation*}
\begin{split}
    \text{For example: } \PDIV_{\{4, 1, 6\},\{f_2,f_5,f_8\}} = \PDV_{\{4, 1,6\},\{f_2,f_5,f_8\}} \\
    - \PDV_{\{4,1\},\{f_2,f_5\}} - \PDV_{\{1,6\},\{f_5,f_8\}} - \PDV_{\{4,6\},\{f_2,f_8\}}  \\+ \PDV_{\{4\},\{f_2\}} + \PDV_{\{1\},\{f_5\}} + \PDV_{\{6\},\{f_8\}} - \PDV_{\{\},\{\}}
\end{split}
\end{equation*}

Summing all interaction values over subsets of $F$ reconstructs the PDV of $F$. Moreover, \cite{shap_from_pdivs} showed that the Shapley value of a feature $f_i$ is a weighted sum of the PDIVs of all subsets $S$ such that $f_i \in S$.

The \emph{\aopdiv} task computes PDIVs for each row $c$ in the explained data $C$, across all feature subsets. 
Let $c|_S = \{c[f_i] : f_i \in S\}$, then the \emph{\aopdiv} task computes:
\[
PDIV_{c|_S, S} \quad \text{for all } c \in C, \; S \subseteq F
\]

An acceptable result for the \emph{\aopdiv} task is to return all non-zero PDIVs. A naive method runs predictions on the background data $B$ (see Def~\ref{def:pdiv}, for simplicity, we assume $|B| = |C| = n$), 
for every feature subset $S \subseteq F$ ($2^f$ subsets in total), 
and for every explained sample $c \in C$. 
This results in a total complexity of $O(2^f n^2 T D)$.

\begin{table*}[t]
\centering
\renewcommand{\arraystretch}{1.6}
\begin{tabularx}{\textwidth}{l|c|c|c}
\textbf{Task} & \textbf{scikit-learn} & \textbf{FastPD} & \textbf{\algnamenew}
\\\hline
PDP, Sect.~\ref{sec:faster_pdp} & $O(kfnTD)$ & $O(T2^DF(n+k))$ & $O(nTL + kTLD + TL2^D D)$ \\
\makecell[l]{\aopdiv, \\ Sect.~\ref{sec:pdivs}} & - & $O(2^F \!nT)$ & $O(n2^D T L + 4^D T L)$ \\
Joint-PDP, Sect.~\ref{sec:joint_PDP} & $O(k^2f^2nTD)$ & $O(T2^DF^2(n+k^2log(f)))$ & $O(k^2f^2log(f) \!+ \!nTL\! +\!  k^2log(f)TLD^2\! +\! TL2^D D^2)$ \\
\end{tabularx}
\caption{\label{complexity_table} Time complexity of \textsc{scikit-learn}, \textsc{FastPD} and \algnamenew (see Fig.~\ref{fig:notations} for notations).}
\end{table*}
\subsection{FastPD}

FastPD~\cite{FastPD} is the current state-of-the-art tool for computing \aopdiv. It exploits the structure and additivity of decision-tree ensembles to improve performance, achieving two main complexity improvements.
\begin{enumerate}
    \item By processing each tree independently, FastPD reduces $2^f$ to $2^F$. This is beneficial, as the number of features used in a tree ($F$) is typically much smaller than the total number of features ($f$).
    \item FastPD preprocessing reduces the data-size term from $O(mn)$ to $O(m+n)$, where $m=|B|$ and $n=|C|$.
\end{enumerate}
Overall, the complexity of FastPD for computing \aopdiv is $O\left(T2^{D+F}(n+m)\right)$.
FastPD can also compute interaction values up to order $s$ in complexity:

\[
O\left(T \times n \times 2^D \times \sum_{i=0}^{s} \binom{F}{i}\right)
\]

This complexity was not discussed in~\cite{FastPD}; we derived it ourselves and verified it empirically. Further details are provided in the supplementary material.

The \textsc{glex} package\footnote{\textsc{glex} GitHub: \url{https://github.com/PlantedML/glex}} implements the FastPD algorithm in R. Although FastPD theoretically supports separate background ($B$) and consumer ($C$) datasets, this option is not available in \textsc{glex}. For our experiments, we extended the package to support this setting while preserving its efficient implementation strategy.
FastPD can also compute PDP in $O(T2^D F(n+k))$ and Joint-PDP in $O(T2^D F^2(n+k^2))$. This is done by computing first-order interactions for PDP and up to second-order interactions for Joint-PDP, and then reconstructing the PDVs from these interactions.

\subsection{Contributions}
Building on our previous work, \algname~\cite{woodelf_paper}, we formulate PDP, Joint-PDP, and \aopdiv as cooperative-game metrics and introduce \algnamenew for their efficient computation on decision tree ensembles. Our contributions include:

\begin{enumerate}
    \item \textbf{PDP and Joint-PDP speedup}: 
    We construct consumer datasets that enable efficient computation of PDP and Joint-PDP using local-attribution methods. We apply this approach to both \algnamenew and FastPD, enhancing both methods. On large datasets, \algnamenew is about $6\times$ faster than FastPD.
        
    \item \textbf{Exponential complexity improvement for \aopdiv}: In the worst case, where all splits use distinct features ($F = 2^D$), FastPD \aopdiv complexity has a $2^{2^D}$ dependence, while \algnamenew depends only on $4^D$. We demonstrate this gap empirically: \algnamenew completes \aopdiv on $\num{400000}$ samples in just \num{5} minutes, whereas FastPD is estimated to take \num{1000000} years for this task.

    \item \textbf{Full PDP}: We introduce a novel method for selecting the $k$ plotted values in a PDP, yielding  \emph{full PDP} - a plot that displays the PDV for any possible feature value. Our speedups make it feasible to compute these plots for all model features within practical runtimes.

    \item \textbf{PDP Approximation}: We demonstrate how \algnamenew can be used to compute approximate PDPs, establishing a simple connection between approximate PDPs, exact PDPs, and Path-Dependence SHAP.
        
\end{enumerate}

\section{Preliminaries}
\label{sec:woodelf_intro}

\subsection{Pseudo Boolean Functions} 
\label{sec:PB_and_SHAP}

We remind the reader of Pseudo Boolean (PB) functions~\cite{pseudo_boolean_functions} and their representation in Weighted Disjunctive Normal Form (WDNF)~\cite{DBLP:conf/dna/ZhangJ05}, followed by the linearity property in this context.

\begin{definition}[PB Function]
A \emph{Pseudo Boolean (PB) function} is a function of the form
$F(x_1,\dots,x_h): \{0,1\}^h \to \mathbb{R}$.
\end{definition}

\begin{definition}[Positive and Negative Literal; Cube; WDNF]
A \emph{literal} is a Boolean variable $x_i$ (\emph{positive literal}) or its negation $\neg x_i$ (\emph{negative literal}).  
A \emph{cube} is a conjunction of literals.  
A \emph{Weighted Disjunctive Normal Form (WDNF)} formula represents a PB function as:
\[
F(x_1,\dots,x_h) = \sum_{k=1}^m w_k \cdot c_k(x_1,\dots,x_h),
\]
where each $c_k$ is a cube and $w_k \in \mathbb{R}$ is its weight.

\end{definition}

\begin{definition}[$S_k$, $S_k^+$, $S_k^-$] For a cube $c_k$, let $S_k$ denote its set of \emph{variables}, $S_k^+$ its \emph{positive literals}, and $S_k^-$ its \emph{negative literals}.
\end{definition}

\begin{definition}[Linearity]
\label{def:linearity}
A function $g$ over WDNF formulas satisfies \emph{linearity} if its value on a WDNF formula $F = \sum_{k=1}^m w_k \cdot c_k$ can be expressed as the weighted sum of its values on the individual cubes:
\[
g(F) = \sum_{k=1}^m w_k \cdot g(c_k)
\]
\end{definition}

\subsection{SHAP on Decision Trees and \algname}
\label{sec:woodelf_into}

The inputs to Background SHAP are a decision tree ensemble $M$, a consumer dataset $C$ containing the samples to be explained, and a background dataset $B$. For each consumer $c \in C$, the goal is to compute Shapley values~\cite{original_shapley_paper} for all of its features.

A cooperative game is defined over $M$, $B$, and $c$: when feature $f_i$ participates ($f_i \in S$), it takes the consumer value $c[f_i]$; when it is missing ($f_i \notin S$), it is iteratively replaced with values from each baseline $b \in B$. The model prediction is evaluated for every $b \in B$, and the results are averaged.

\algname, introduced in our previous work~\cite{woodelf_paper}, models this game as a PB function $F$, assigning 1 to $f_i$ when it participates and 0 when it is missing:
\begin{equation}
\label{model_to_wdnf_equation}
\frac{1}{|B|} \sum_{b \in B} M \!\left( \begin{cases} 
c[f_i] & \text{if } f_i \in S, \\
b[f_i] & \text{if } f_i \notin S
\end{cases} \right)
= F \!\left( \begin{cases} 
1 & \text{if } f_i \in S, \\
0 & \text{if } f_i \notin S
\end{cases} \right)
\end{equation}

\algname efficiently constructs a WDNF formula representing the PB function $F$. It then computes Background SHAP by applying a simple linear-time formula to the constructed WDNF. Path-Dependent SHAP is computed similarly, by constructing a slightly modified WDNF.

Importantly, \algname is not limited to Shapley values. This approach enables efficient computation of any function $v$ that takes a WDNF formula as input and satisfies the linearity property (e.g., Banzhaf values). 

A naive Background SHAP approach evaluates each consumer–background pair individually, resulting in $O(nm)$ complexity, where $|B| = m$, $|C| = n$, and $L$, $T$, and $D$ are treated as constants. \algname and PLTreeSHAP~\cite{linear_background_shap} use preprocessing to reduce this cost to $O(n + m)$.

More precisely, the complexity of \algname depends on the amortized cost of evaluating the function $v$ across all cubes over the variables $\{x_1,\dots,x_D\}$ (where $D$ is the tree depth), see Appendix~D of~\cite{woodelf_extended_paper}. Let $O(v_c^{avg})$ denote the average computational cost of evaluating $v$ on a single cube, and let $O(v_s)$ denote the number of unique feature subsets returned by $v$ across all cubes. For example, in the case of Shapley values, each variable in a cube is considered individually, so $O(v_c^{avg}) = O(v_s) = O(D)$. The overall complexity of \algname is:
\[
\text{Path-Dependent: } O(nTLv_s + TL3^D v_c^{avg})
\]
\[
\text{Background: } O(mTL + nTLv_s + TL3^D v_c^{avg})
\]

\section{Efficient PDP with \algnamenew}
\label{sec:faster_pdp}

In this section, we extend \algname to efficiently compute all required PDVs for plotting the PDPs of all features.

\paragraph{The simple approach.} Observe the strong correspondence between the PDV formula (Formula~\ref{PDV_formula}) and the model-prediction game formula (Formula~\ref{model_to_wdnf_equation}): assigning $S = \{f_j\}$ and $c[f_j] = v$ in Formula~\ref{model_to_wdnf_equation} renders the expressions identical.

To better understand this correspondence, consider the PDV of \textit{age} at $v = 50$. In the standard PDV computation, we set \textit{age} to 50 for all samples in the background dataset $B$, run the model, and average the predictions (Fig.~\ref{fig:brute_example}).

Equivalently, we can view this through a game-theoretic lens: imagine a consumer $c$
with $c[\textit{age}] = 50$ and the coalition where the feature \textit{age} participates and is fixed to 50, while all other features
are missing and filled with their values from $B$.
The model’s expected prediction where only \textit{age} participates
is exactly the PDV of \textit{age} at $v = 50$.

Leveraging this correspondence, \algname naturally extends to compute PDVs, yielding our \algnamenew. Let $M$ be a decision tree ensemble, $B$ its training set (or any dataset), and $\hat{\mathbf{f}}$ the set of features in $B$. For each $f_i \in \hat{\mathbf{f}}$, select $k$ values $v_{1_{f_i}},\dots,v_{k_{f_i}}$. We proceed as follows:
\begin{enumerate}
    \item Construct a consumer dataset $C$ with $k$ rows. The $t$-th row of $C$ contains the $t$-th selected value for each $f_i \in \hat{\mathbf{f}}$.
    \begin{equation}
    \label{eq:C_formation}
    C = (\:(v_{t_{f_i}})_{\forall f_i\in \hat{\mathbf{f}} } \:)_{\forall 1 \le t \le k}
    \end{equation}
    
    \item Use \algname with the background data $B$, the model $M$ and the consumer data $C$, to produce a WDNF for each row in $C$.
    \item For each WDNF formula $F$ and feature $f_i$, apply:
    \begin{equation}
    \label{eq:PDV_assignment}
    \PDV_{f_i} = F(f_i=1,\:\forall f_{j \neq i}=0)
    \end{equation}
    For row $t$ and feature $f_i$, this returns the average prediction when $f_i$ participates (set to $v_{t_{f_i}}$) and others are missing, i.e., $PDV_{v_{t_{f_i}},f_i}$.
\end{enumerate}

Since all assignments (including $F(f_i=1,\:\forall f_{j \neq i}=0)$) are functions over WDNF formulas that satisfy linearity, \algnamenew is guaranteed to compute them correctly.




\paragraph{The efficient approach.} Notice that Formula~\ref{eq:PDV_assignment} above violates the \emph{null player property}; that is, a variable whose truth value never affects the resulting weight might still receive a nonzero value. For example, the cube $3(\neg x_2)$ does not include the variable $x_5$, yet it still affects $\PDV_{x_5}$ (since the assignment $\{x_5 = 1,\; \forall f_{j \neq 5} = 0\}$ satisfies it). As a result, a single cube may influence all the features in the dataset, which significantly hurts \algname's performance.

Luckily, \emph{Centered Partial Dependence Values (CPDVs)}, obtained by subtracting the mean prediction on $B$ from each PDV, do satisfy the null player property:
\begin{equation}
\label{eq:cpdv_definition}
    \CPDV_{v,f_i} = \PDV_{v,f_i} - \frac{\sum_{b \in B} M(b)}{|B|}
\end{equation}

In the WDNF constructed by \algname, the mean term $\frac{1}{|B|}\sum_{b\in B}M(b)$ corresponds to the assignment where all variables are set to $0$ (all features are “missing” and taken from $B$). Combining this insight with Formula~\ref{eq:PDV_assignment} yields a simple formula for computing CPDV:
\begin{equation}
\label{eq:CPDV_assignment}
\CPDV_{f_i} = F(f_i=1,\:\forall f_{j \neq i}=0) - F(\forall f_j=0)
\end{equation}

Fig.~\ref{fig:brute_vs_woodelf} illustrates how \algnamenew and Formula~\ref{eq:CPDV_assignment} compute CPDVs efficiently.

\begin{figure}[t]
    \centering
    \includegraphics[width=0.99\linewidth]{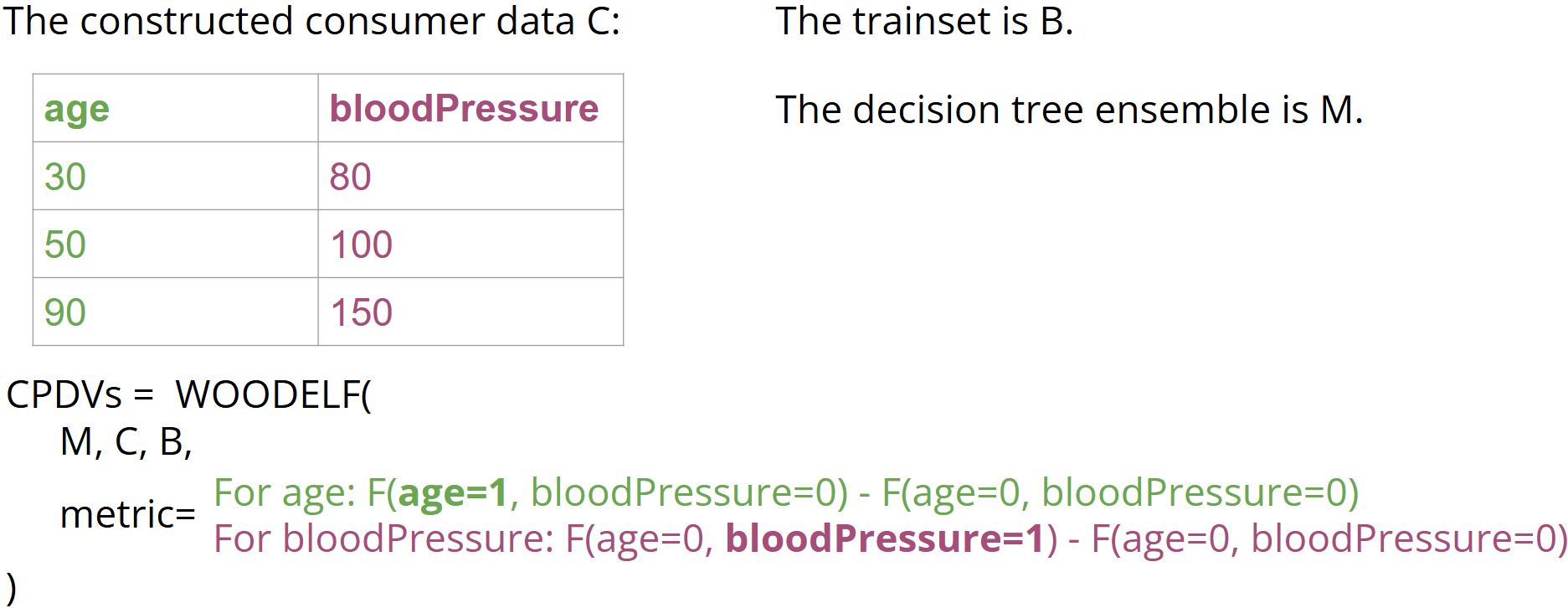}
    \caption{Using \algnamenew to compute the CPDVs required for the plots in Fig.~\ref{fig:pdp_example}.}
    \label{fig:brute_vs_woodelf}
\end{figure}

Finally, Formula~\ref{final_PDP_fomula} is equivalent to Formula~\ref{eq:CPDV_assignment} on WDNFs. It satisfies both linearity and the null player property, making its use within \algname both correct and efficient:
\begin{equation}
\CPDV_{f_i} = \sum\limits_{k=1}^m w_k \times 
\begin{cases}
1 & \text{if } (S_k^+ = \{f_i\}) \land (f_i \notin S_k^-) \\
-1 & \text{if } (f_i \in S_k^-) \land (S_k^+ = \emptyset) \\
0 & \text{otherwise}
\end{cases}
\label{final_PDP_fomula}
\end{equation}

The computation of CPDVs with the \textsc{woodelf} package is straightforward — Fig.~\ref{fig:CPDMetric} shows the entire code. The computation is performed by calling \algname with a \emph{CPDVMetric} instance, and requires no modifications to the package.


\begin{figure}[t]
\centering
\includegraphics[width=0.99\linewidth]{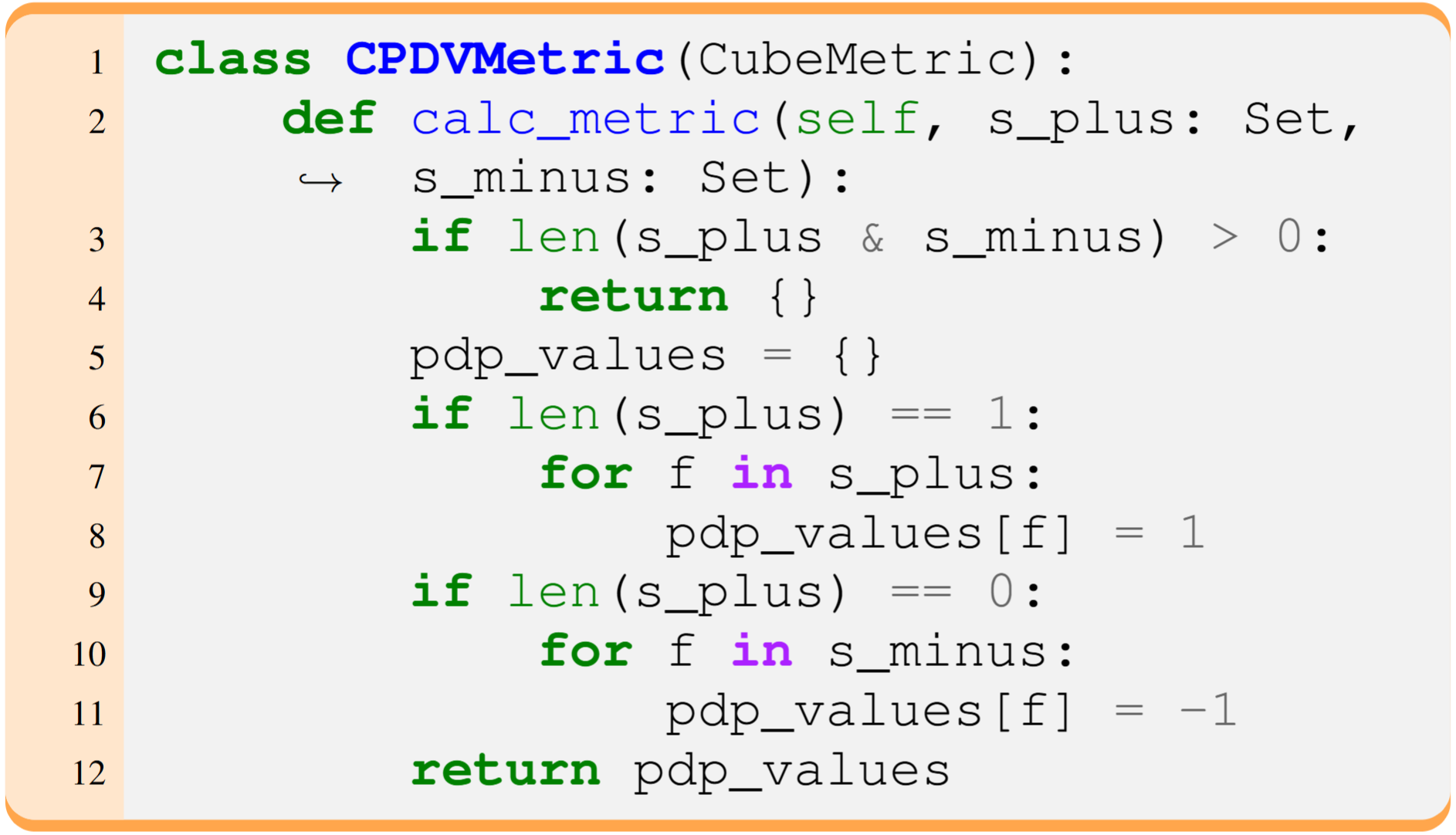}
\caption{
Python code for the class \emph{CPDVMetric} that implements Formula~\ref{final_PDP_fomula} for a single cube with weight 1. It inherits \algname's CubeMetric; a base class for metrics over WDNF formulas.}
\label{fig:CPDMetric}
\end{figure}

Recovering PDVs from CPDVs is immediate: add back the mean prediction on $B$. We now present the pseudo-code for our PDP algorithm. Alg.~\ref{alg:fast_pdp} receives a model $M$, a dataset $B$, and an integer $k$, and computes PDPs for all features in $B$, where each plot uses $k$ sampled points.

\begin{algorithm}[t]
\caption{Efficient Partial Dependence Plot}
\label{alg:fast_pdp}
\begin{algorithmic}[1]
\Function{WPDP}{$M, B, k$}
    \State Select $k$ possible values for each feature in $B$, and store them in $C$ (see Formula~\ref{eq:C_formation}). \label{line:c_construction}
    \State $CPDVs$ = \Call{\algname}{$M, C, B, \textit{CPDVMetric()}$} \label{line:woodelf_call}
    \State $PDVs = CPDVs + M(B).\textit{mean}()$ \label{line:decentralized}
    \State Use $PDVs$ to plot the PDP of each feature. \label{line:plot_pdp}
\EndFunction
\end{algorithmic}
\end{algorithm}

A complexity analysis of Alg.~\ref{alg:fast_pdp} is provided in the supplementary material. In short, the \algname call (line~\ref{line:woodelf_call}) dominates the runtime: we show that $O(v_c^{avg}) = O(v_s) = O(D)$ and that cubes with $|S_k^+| > 1$ can be ignored, leaving only $O(2^D D)$ relevant cubes. The complexity is:
\[
\text{Exact PDP}: O(nTL \;+\; kTLD \;+\; TL2^D D)
\]

\paragraph{Approximate PDP.} Node cover is the number of training samples that reach each node during training.
\cite{GreedyFunctionApproximation} introduces a PDP approximation based on node cover. \cite{FastPD} later observed that this uses the same approximation technique as Path-Dependent SHAP. In cooperative game terms, both methods are defined over the same game. \cite{FastPD} also discusses an inconsistency of this approach: it may produce different explanations for functionally equivalent trees.

\algnamenew leverages this connection to unify approximate and exact PDP under a single algorithm. Specifically, approximate PDP is obtained by computing the same PDV expression on the WDNF built with the Path-Dependent construction. In practice, this reduces to running the algorithm with an empty background dataset, so it defaults to the Path-Dependent setting (i.e., simply call: $\textit{WPDP}(M, \emptyset, k)$).





The complexity of Alg.~\ref{alg:fast_pdp} with an empty background dataset (see the Path-Dependent complexity in Sect.~\ref{sec:woodelf_into}):
\[
\text{Approximate PDP}: O(kTLD \;+\; TL2^D D)
\]

\section{Full PDP with \algnamenew}
\label{sec:full_pdp}

Standard PDPs depend on both the number ($k$) and choice of sampled values. A small $k$ can hide important patterns. For example, a TV‑rating model may reveal a sharp viewer spike at lunch (11:30–13:00), but a PDP sampled every four hours (6:00, 10:00, 14:00, 18:00) would miss it.

Some effects are hard to capture even with large $k$. For instance, a fraud‑detection model may learn that users who leave their annual salary at the default value of $\num{60000}$ are more likely to commit fraud. Even $k=\num{1000}$ might miss this, as it occurs at a single value within a wide range.

To address this, we introduce \emph{Full PDPs}, which capture all such behaviors in decision tree ensembles. For a given feature (e.g., \textit{age}), we take all split thresholds from tree nodes that split on that feature and use them as x-axis values. The rationale is simple: the ensemble changes its prediction only when \textit{age} crosses one of its thresholds. Between any two consecutive thresholds $th_1 < th_2$, the prediction remains constant:
\[
\forall_{th_1 < v < th_2} (\PDV_{v,\textit{age}} = \PDV_{th_1,\textit{age}})
\]
Thus, plotting only these points suffices.

Full PDPs reveal the complete piecewise‑constant structure of the decision tree ensemble, including narrow spikes and single‑value effects that sampled PDPs often miss. They are particularly useful for detecting overfitting and checking monotonicity. Fig.~\ref{fig:full_pdp_example} compares sampled PDPs with different $k$ values to a Full PDP, showing how the latter exactly recovers the true step function using only the feature’s split thresholds.

The main drawback is computational cost: large ensembles may have thousands of thresholds per feature. Previously, this made Full PDPs impractical. Both \algnamenew (via Alg.~\ref{alg:fast_pdp}) and FastPD handles large $k$ efficiently, enabling Full PDPs at practical speeds.

\begin{figure}[t]
    \centering
    \includegraphics[width=0.99\linewidth]{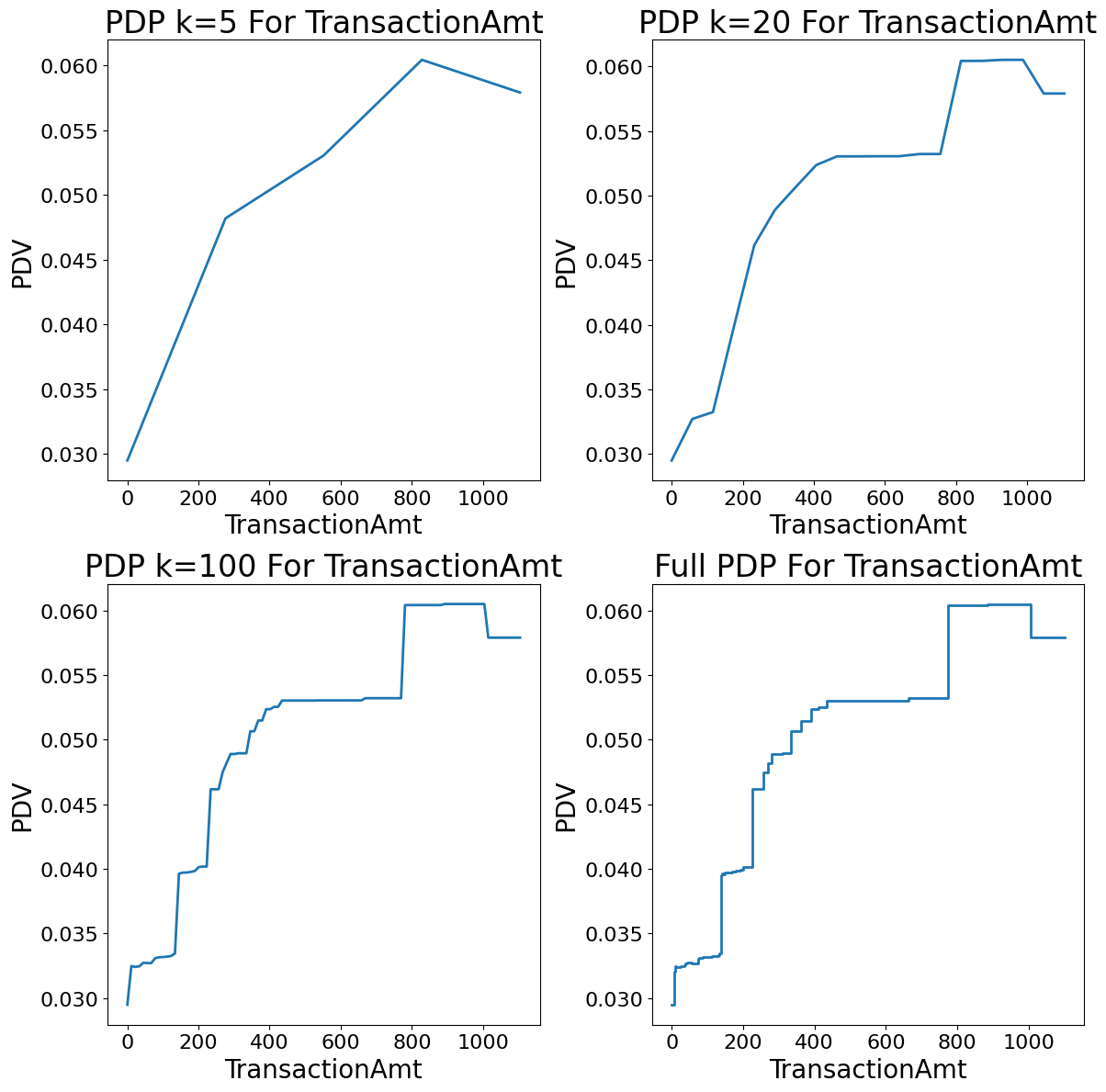}
    \caption{Comparison of PDPs for \textit{TransactionAmt} on the IEEE-CIS dataset using $k=5$, $k=20$, and $k=100$ uniformly spaced intervals versus the proposed \emph{Full PDP}. Standard PDPs approximate the 
    structure as $k$ increases but fail to recover exact discontinuities even at $k=100$. The Full PDP recovers the true step function using only 66 evaluation points corresponding to split thresholds.}
    \label{fig:full_pdp_example}
\end{figure}

\section{\aopdiv with \algnamenew}
\label{sec:pdivs}

In this section, we extend \algname to compute all Partial Dependence Interaction Values (PDIVs; see Def.~\ref{def:pdiv}). Our approach follows three steps:
\begin{enumerate}
    \item Derive a formula for PDIVs on a WDNF representation.
    \item Use this formula to implement a Python class inheriting \textit{CubeMetric} that computes all PDIVs of a cube.
    \item Run \algname with that \textit{CubeMetric}.
\end{enumerate}

For a pseudo-Boolean function $F$ and variable subset $X$, PDIV is defined by Formula~\ref{eq:pdiv_on_PB} (combining Def.~\ref{def:pdv_for_sets}, Def.~\ref{def:pdiv}, and Formula~\ref{eq:PDV_assignment}): 
\begin{equation}
    \label{eq:pdiv_on_PB}
    \PDIV_{X} = \sum_{S \subseteq X} (-1)^{|X|-|S|} F(\forall x_{i\in S}=1,\; \forall x_{i \notin S}=0)
\end{equation}

Because PDIV is linear, computing $PDIV_X$ on the WDNF reduces to determining the contribution of each cube $c_k$ separately and summing the results. We discard unsatisfiable cubes where $S^+_k \cap S^-_k \neq \emptyset$. A cube affects only subsets $X$ such that $S^+_k \subseteq X \subseteq S_k$:
\begin{itemize}
    \item If $X \not\subseteq S_k$, the interaction value is zero, since $X$ contains at least one feature that does not appear in the cube and is therefore ignored. Any interaction involving an ignored feature is zero.
    \item If $S^+_k \not\subseteq X$, at least one positive literal is falsified, so all assignments in Formula~\ref{eq:pdiv_on_PB} return zero.
\end{itemize}

For subsets $X$ where $S^+_k \subseteq X \subseteq S_k$, Formula~\ref{eq:pdiv_on_PB} sums the assignments of all $S \subseteq X$. The only such $S$ to satisfy the cube and affect the summation is $S=S^+_k$. Thus:
\begin{equation}
\PDIV_{X} = \sum\limits_{k=1}^m w_k \times 
\begin{cases}
(-1)^{|X|-|S_k^+|} & \text{if $S^+_k \subseteq X \subseteq S_k$}  \\
0 & \text{otherwise}
\end{cases}
\label{eq:pdiv_on_WDNF}
\end{equation}

    

\begin{figure}[t]
\centering
\includegraphics[width=0.99\linewidth]{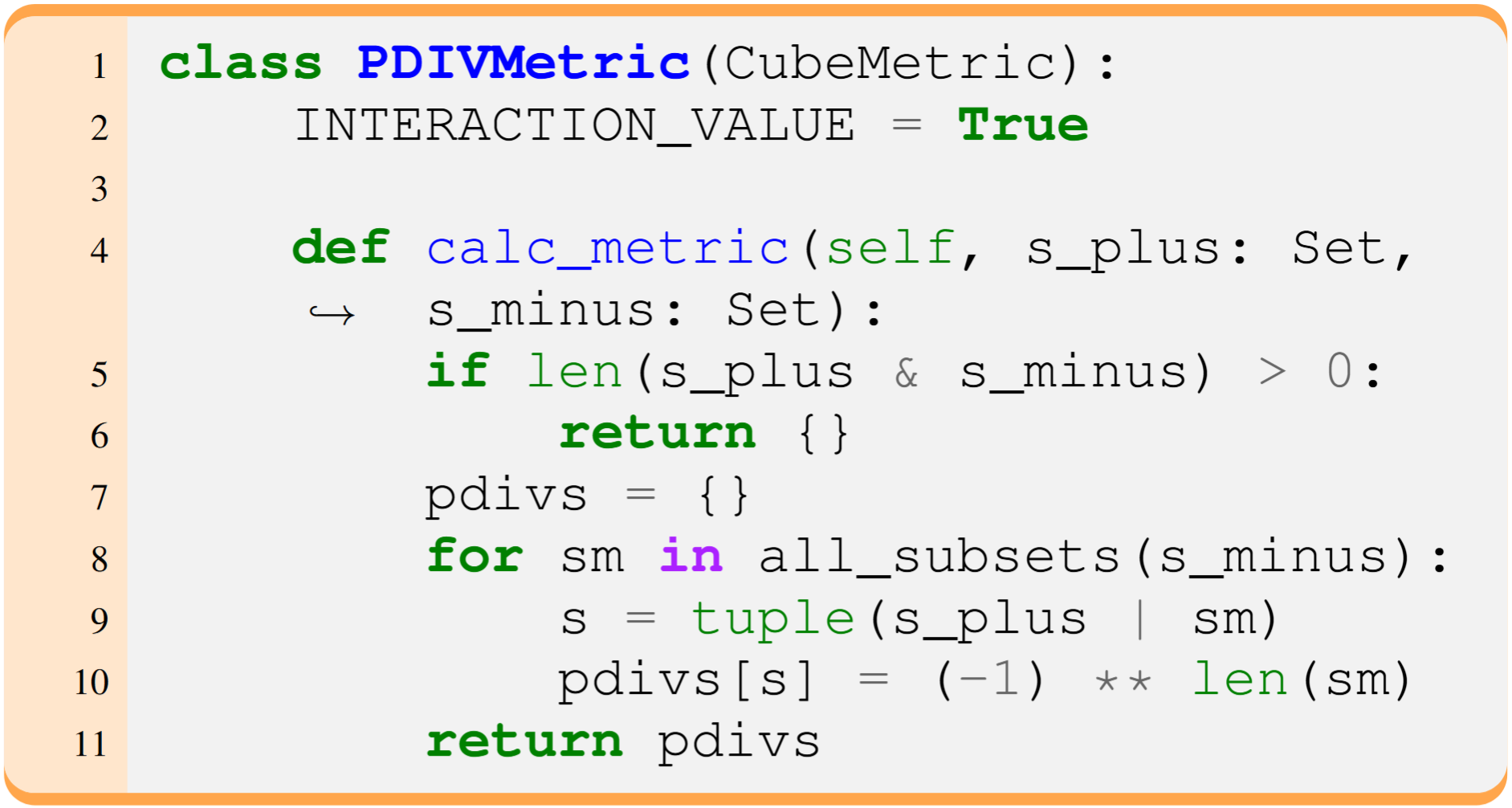}
\caption{
Python implementation of Formula~\ref{eq:pdiv_on_WDNF} for a single cube with weight 1. \texttt{all\_subsets} returns all subsets of a given set.}
\label{fig:PDIV}
\end{figure}

The \textit{PDIVMetric} code in Fig.~\ref{fig:PDIV} uses Formula~\ref{eq:pdiv_on_WDNF} and computes the effect of a single cube on all relevant subsets $X$. 
Running \algnamenew with \textit{PDIVMetric} is straightforward:
\[
\text{return } \textit{WOODELF}(M, C, C, \textit{PDIVMetric}())
\]
This simple \algnamenew call computes all the desired \aopdiv.
Here $C$ is used as both consumer and background data, but a separate background dataset can be provided (e.g., using the training set as background). Limiting consumer size is recommended for memory efficiency. An empty background dataset produces approximate PDIVs via the Path-Dependent approach. As \algname is GPU-friendly, our method naturally supports GPU acceleration.

\paragraph{Complexity.} For \textit{PDIVMetric}, $O(v_s)=O(2^D)$ and $O(v_c^{avg})=O((\frac{4}{3})^D)$, giving:
\[
\text{\aopdiv: } O(nTL2^D + TL4^D)
\]

This complexity is exponentially better than the previous state of the art, and for balanced decision trees with $2^D$ nodes it becomes polynomial in the input size. A detailed complexity analysis is provided in the supplementary material.

The intuition behind this complexity improvement is as follows. The naive algorithm treats the model as a black box, leading to complexity exponential in the total number of features. FastPD~\cite{FastPD} improves on this by processing each tree separately; its complexity is exponential in the number of features used per tree, resulting in a worst‑case dependence of $2^{2^D}$, which remains exponential in the input size. \algnamenew advances this line of improvement by treating each root‑to‑leaf path separately, yielding complexity exponential only in the number of features used per path, i.e., $2^D$.

\section{Joint-PDP with \algnamenew}
\label{sec:joint_PDP}

To compute all Joint PDVs, we use the algorithm from Sect.~\ref{sec:pdivs} and the identity:
\begin{equation}
\begin{split}
    \PDV_{\{a_i, b_j\}, \{f_a, f_b\}} = \PDIV_{\{a_i, b_j\}, \{f_a, f_b\}} + \\
    \PDIV_{\{a_i\}, \{f_a\}} + \PDIV_{\{b_j\}, \{f_b\}} + \PDIV_{\{\}, \{\}}
\end{split}
\label{eq:pair_PDV}
\end{equation}

The algorithm:
\begin{enumerate}
    \item Compute all PDIVs of order 2 and 1 using Sect.~\ref{sec:pdivs}.
    \item $\PDIV_{\{\},\{\}}$ is simply the average prediction on the background data. Compute it directly.
    \item Apply Formula~\ref{eq:pair_PDV} to obtain PDVs for all feature pairs.
\end{enumerate}

Deriving PDVs from PDIVs, as in Formula~\ref{eq:pair_PDV}, is also used in FastPD~\cite{FastPD}. Since only up to second-order interactions matter, step 1 can ignore cubes with $|S^+_k| > 2$, leaving only $O(2^D D^2)$ cubes to consider.

To construct the consumer dataset, we sample $k$ values per feature. For every pair of features $f_a$ and $f_b$, the dataset must contain all $k^2$ value pairs $(a_i, b_j)$. A naive construction allocates $k^2$ rows per feature, resulting in $O(k^2 f)$ rows overall. Specifically, we first construct a size-$k$ consumer data using the PDP procedure (Formula~\ref{eq:C_formation}). Then, for every feature $f_a$ and every sampled value $a_i$, we duplicate the constructed data $C$ and set feature $f_a$ to $a_i$ in all rows:
\[
\forall_{f_a\in F} \forall_{1 \le i \le k}\forall_{c \in C}:  c[f_a] \gets a_i
\]
Using a more efficient construction (see supplementary material), we reduce the dataset size to only $O(k^2 \log(f))$ rows and adapt \algnamenew to operate on this compressed representation.
Overall complexity (see notations in Fig.~\ref{fig:notations}):
\[
O(k^2f^2log(f) + nTL + k^2log(f) \! \cdot \!TLD^2 + TL2^DD^2)
\]

This consumer data construction is not specific to \algnamenew. We also use it to accelerate FastPD on Joint-PDP. 

As the PDIVs algorithm supports GPU acceleration, Joint-PDP is also GPU-friendly. Using full PDP sampling, one can plot \emph{full Joint-PDPs} showing the joint dependence for all values of $f_a$ and $f_b$.

\section{Experimental Results}
\label{sec:experimental_results}

The PDP implementation is available in \sckl via the \textit{partial\_dependence} function. This function implements both the exact method (\textit{method='brute'}) and the approximation method (\textit{method='recursion'}) of PDV computation.

FastPD is implemented in the \textsc{glex} R package. It is the current state-of-the-art tool for computing \aopdiv. The current implementation focuses on local feature attributions and supports only the case where the background and consumer datasets are identical. We extend the package to support distinct consumer and background datasets. Combined with our consumer-data constructions for PDP, Joint-PDP, and Full PDP, this enables efficient computation of these tasks within the FastPD framework.

Finally, we implemented \algnamenew in Python for all discussed tasks and empirically validated it by comparing its outputs to those of \sckl.


We compare \algnamenew running times against \sckl and FastPD on two large and widely used datasets: 
\begin{itemize}
    \item The IEEE-CIS fraud detection~\cite{kaggle_ieee_fraud_detection} dataset with \num{472432} rows and \num{397} features after one-hot encoding. 
    \item The KDD Cup 1999 intrusion detection~\cite{kdd_cup_1999_data} dataset with \num{4898431} rows and \num{127} features after one-hot encoding. 
\end{itemize}
For consistency, these are the same datasets we used in the original \algname paper~\cite{woodelf_paper}. Additional experiments on smaller datasets and varying tree depths are reported in the supplementary material.


We used \sckl’s \emph{HistGradientBoostingRegressor} model, which employs the histogram-based approach popularized by LightGBM~\cite{LightGBM}. In all experiments, models were trained with 100 trees, each limited to a maximum depth of 6, matching XGBoost’s widely used default \textit{max\_depth}. Experiments were conducted on Google Colab. For CPU training, we enabled the high-memory runtime with 50GB RAM (the default is 12GB). For GPU training, we used the T4 runtime type with high-memory enabled. All experiment notebooks are available in the \textsc{WoodelfExperiments} GitHub repository\footnote{The \textsc{WoodelfExperiments} repository containing all the notebooks used in our experiments: 
\url{https://github.com/ron-wettenstein/WoodelfExperiments/tree/main/woodelf_experiments/woodelf_pdp/IJCAI26}}.

\begin{table}[h!]
\scriptsize
\centering
\renewcommand{\arraystretch}{1.6}

\begin{tabularx}{\columnwidth}{X|l|l|l|l}
\multicolumn{5}{c}{\textbf{IEEE-CIS}} \\
\textbf{Task} & \textbf{scikit-learn} & \textbf{FastPD} & \multicolumn{2}{c}{\textbf{\algnamenew}} \\
 & & & \textbf{CPU} & \textbf{GPU} \\\hline
Exact PDP k=5 & 58.5 min & 99 sec & 14.5 sec & 13.1 sec \\
Exact PDP k=10 & 112.4 min & 99 sec & 15.2 sec & 10 sec \\
Exact PDP k=100 & 19 hours* & 99 sec & 14.8 sec & 9 sec \\
Exact full PDP & 59.3 min & 98 sec & 11.8 sec & 7 sec \\
Exact Joint-PDP k=5 & 35 days* & 130 sec & 19 sec & 25 sec \\
Approx. PDP k=5 & 3.3 sec & - & 4.6 sec & 5.4 sec \\
Approx. Joint-PDP k=5 & 19.4 min & - & 10.2 sec & 21.4 sec \\
\aopdiv n=10,000 & - & $6\! \cdot \! 10^4$ years* & 21 sec & 22.6 sec \\
\aopdiv & - & $2.9\! \cdot \! 10^6$ years* & 275 sec & 30.7 sec \\
\end{tabularx}

\vspace{0.8em}

\begin{tabularx}{\columnwidth}{X|l|l|l|l}
\multicolumn{5}{c}{\textbf{KDD Cup}} \\
\textbf{Task} & \textbf{scikit-learn} & \textbf{FastPD} & \multicolumn{2}{c}{\textbf{\algnamenew}} \\
 & & & \textbf{CPU} & \textbf{GPU} \\\hline
Exact PDP k=5 & 72.1 min & 520 sec & 95.6 sec & 33.4 sec \\
Exact PDP k=10 & 100.2 min & 519 sec & 95.9 sec & 33.5 sec \\
Exact PDP k=100 & 17 hours* & 519 sec & 96.2 sec & 33.6 sec \\
Exact full PDP & 116.5 min & 519 sec & 90.3 sec & 26.8 sec \\
Exact Joint-PDP k=5 & 9 days* & 568 sec & 95.2 sec & 37.5 sec \\
Approx. PDP k=5 & 5.5 sec & - & 6.5 sec & 8.7 sec \\
Approx. Joint-PDP k=5 & 10.1 min & - & 21.6 sec & 26 sec \\
\aopdiv n=10,000 & - & 15 days* & 80.6 sec & 24 sec \\
\aopdiv & - & 20 years* & 35 min & 45 sec** \\
\end{tabularx}

\caption{\label{performance_table}Performance comparison between scikit-learn, FastPD, and \algnamenew. $k$ is the number of sampled values per feature; $n$ is the number of rows. Values marked with * are estimates; details on estimation are provided in the supplementary material. Dashes indicate unavailable or non-applicable measurements. **Due to RAM limitations, GPU evaluation of KDD ``\aopdiv'' was conducted on an L4 runtime environment.}
\end{table}

As shown in Table~\ref{performance_table}, our approach outperforms the state of the art, with \algnamenew achieving a $6\times$ speedup over FastPD on PDP and Joint-PDP. Notably, both FastPD and \algnamenew significantly outperform \sckl, the most widely used tool for this task, with the gap becoming especially pronounced for large $k$: tasks that take hours or even several days using \sckl are completed by both approaches in under 10 minutes.

In \aopdiv, where \algnamenew offers an exponential complexity improvement over FastPD, the preformance gap is substantial. On IEEE-CIS, this improvement translates to a reduction in runtime from an estimated one million years with FastPD to just five minutes using our approach.

\section{Conclusion}
\label{sec:conclusion}

In this work, we extend our previous algorithm, \algname, originally designed for Shapley and Banzhaf values, into \algnamenew: a unified framework for efficiently computing PDPs, Joint-PDPs, and Any-Order PDIVs on decision tree ensembles. The two key ideas are to formulate these explanation tasks as linear metrics over WDNF formulas, and to construct compact consumer datasets that allow local-attribution algorithms to efficiently compute global dependence plots. \emph{Full PDPs} take the latter idea a step further by constructing a consumer dataset that exactly capture the complete piecewise-constant behavior of decision tree ensembles.

This perspective yields substantial computational gains, especially on large datasets. For PDP and Joint-PDP, \algnamenew is approximately $6\times$ faster than FastPD. For \aopdiv, where \algnamenew significantly improves the complexity, runtime decrease from several years to just a few minutes.

\section*{Acknowledgments}

This work was supported in part by the Israel Science Foundation grant 2410/22.

\bibliographystyle{named}
\bibliography{ijcai26}


\clearpage

\appendix

\section{Complexity Analysis}

This section presents the complexity analysis of the \emph{WPDP} algorithm (Alg.~\ref{alg:fast_pdp}) and the \algnamenew \aopdiv algorithm (see Sect.~\ref{sec:pdivs}). For the complexity analysis of the \algnamenew Joint-PDP algorithm, see Appendix~\ref{sec:joint_pdp_complexity}. The relevant notations appear in Fig.~\ref{fig:notations}, and the complexity of \algname is detailed in Sect.~\ref{sec:woodelf_into}.

\subsection{The PDP Algorithm}
In this section we analyze the complexity of the \emph{WPDP} function (Alg.~\ref{alg:fast_pdp}), which computes PDP.

Let $n$ be the number of rows in the dataset $B$. Our approach constructs a consumer dataset $C$ with $k$ rows, corresponding to the $k$ sampled values. Computing the average prediction on $B$ at line~\ref{line:decentralized} requires $O(nTD)$ time. Constructing $C$ at line~\ref{line:c_construction} of Alg.~\ref{alg:fast_pdp}, adding a constant to all CPDVs at line~\ref{line:decentralized}, and plotting them at line~\ref{line:plot_pdp} requires $O(kf)$ time, where $f$ is the number of features. Since features that are not used by the model can be ignored, we can assume the number of features is smaller than the ensemble size: $f < TL$. Therefore, the dominant cost arises from running \algname at line~\ref{line:woodelf_call}. 

The average complexity of \emph{CPDVMetric} and the number of feature subsets it returns are $O(v_c^{avg}) = O(v_s) = O(D)$. Therefore, in the standard \algname implementation, the call at line~\ref{line:woodelf_call} requires $O(nTL \;+\; kTLD \;+\; TL3^D D)$.

Luckily, Formula~\ref{final_PDP_fomula} offers an additional efficiency gain: all cubes $c_k$ with $|S^+_k| > 1$ make no contribution and can be ignored. Out of the $O(3^D)$ possible cubes of length up to $D$, only $O(2^D D)$ cubes contain at most one positive literal. This reduces the complexity of key steps in \algname (computing $M$ and $s$) from $O(TL3^D D)$ to $O(TL2^D D)$. The total complexity of this improved version is:
\[
\text{Exact PDP}: O(nTL \;+\; kTLD \;+\; TL2^D D)
\]

\subsection{\aopdiv}
\label{sec:any_order_pdiv_complexity}
In this section we analyze the complexity of the \algnamenew approach for computing Any-Order Partial Dependence Interaction values (see Sect.~\ref{sec:pdivs}). 
As discussed in Sect.~\ref{sec:woodelf_into}, when \algname computes values defined by a function $v$ under the Background approach, its runtime is
$O(mTL + nTL\,v_s + TL\,3^D\,v_c^{avg})$, where $v_c^{avg}$ is the average cost of evaluating $v$ on a cube and $v_s$ is the number of distinct feature subsets that $v$ outputs across all cubes.

To analyze \algnamenew with \emph{PDIVMetric} (Fig.~\ref{fig:PDIV}), we bound $v_c^{avg}$ and $v_s$ for \emph{PDIVMetric} over the $3^D$ cubes on variables $\{x_1,\dots,x_D\}$, where each variable $x_i \in \{x_1,\dots,x_D\}$ can appear positively, negatively, or not at all.
This setting corresponds to the hardest case in which features along a depth-$D$ path do not repeat.
If features repeat, the resulting cubes can only become shorter, and evaluating \emph{PDIVMetric} becomes easier: for example, adding $x_i$ to a cube that already contains $x_i$ leaves the cube unchanged, while adding $x_i$ to a cube that already contains $\neg x_i$ makes \emph{PDIVMetric} discard the cube in $O(1)$ time (see line~6 in Fig.~\ref{fig:PDIV}).

Among these $3^D$ cubes, the cube $(\neg x_1 \land \cdots \land \neg x_D)$ has the worst complexity.
On this cube, \emph{PDIVMetric} considers all subsets of the $D$ negated literals, resulting in $O(2^D)$ time; hence $v_c^{avg} \le 2^D$.
Across all cubes, the number of distinct feature subsets returned by \emph{PDIVMetric} is at most the number of subsets of $\{x_1,\dots,x_D\}$, implying $v_s = O(2^D)$.
Substituting these bounds yields an overall runtime of
$O\!\left(mTL + nTL\,2^D + TL\,6^D\right)$.

In practice, the bound is tighter: while \emph{PDIVMetric} has complexity $O(2^D)$ in the worst case, it is only $O(1)$ in the best case. Proposition~\ref{PDIV_vcavg} below proves that the average complexity of \emph{PDIVMetric} is $O((\frac{4}{3})^D)$.

\begin{proposition}
\label{PDIV_vcavg}
The average complexity of the function $v$ defined by \emph{PDIVMetric} (see Fig~\ref{fig:PDIV}) is:
\[
O(v_c^{avg}) = O\!\left(\left(\tfrac{4}{3}\right)^D\right).
\]
\end{proposition}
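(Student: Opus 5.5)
Following the setup of Sect.~\ref{sec:any_order_pdiv_complexity}, I would average the cost of \emph{PDIVMetric} uniformly over the $3^D$ cubes on $\{x_1,\dots,x_D\}$, where each variable independently appears positively, negatively, or not at all. By Formula~\ref{eq:pdiv_on_WDNF}, the work on a cube $c_k$ is dominated by enumerating the sets $X$ with $S_k^+ \subseteq X \subseteq S_k$. Each such $X$ is $S_k^+$ together with an arbitrary subset of the negated variables $S_k^-$, so a cube with $|S_k^-| = j$ produces exactly $2^{j}$ output subsets. I would model the cost of a cube as $O(2^{|S_k^-|} + D)$, where the additive $O(D)$ covers reading the cube and computing $S_k^+$ and $S_k^-$. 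The main task is then to compute the average of $2^{|S_k^-|}$ over all cubes.

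The key step is a generating-function computation. Group the cubes by the status of each variable: each variable contributes a factor $2$ if it is negated and a factor $1$ if it is positive or absent. This gives
\[
\sum_{c_k} 2^{|S_k^-|} \;=\; \prod_{i=1}^{D} (1 + 1 + 2) \;=\; 4^D .
\]
Equivalently, $\sum_{j}\binom{D}{j}2^{D-j}2^{j} = 4^D$, obtained by choosing the $j$ negated variables and letting each of the remaining $D-j$ variables be positive or absent. Dividing by the $3^D$ cubes gives an average of $(4/3)^D$ output subsets per cube. The additive $O(D)$ term is dominated asymptotically, or can be absorbed by noting that \algname builds cubes incrementally so the per-cube bookkeeping is amortized $O(1)$. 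Either way, $O(v_c^{avg}) = O((4/3)^D)$.

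The step I expect to be the main obstacle is justifying that restricting to non-repeating features and to the full $3^D$ family is the worst case. For repeated features, I would reuse the argument already given before the proposition. A repeated positive literal leaves the cube unchanged. A contradictory pair $x_i \land \neg x_i$ causes the cube to be discarded in $O(1)$ by line~6 of Fig.~\ref{fig:PDIV}. In either case the number of distinct negated variables cannot exceed that of a corresponding non-repeating cube, so every cost is bounded termwise by the non-repeating case.

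A second point to check is that the unsatisfiable cubes the code discards early cost $O(1)$ and therefore only lower the average. This argument also needs \algname's complexity to be genuinely stated in terms of a uniform average over the $3^D$ cubes, as in Sect.~\ref{sec:woodelf_into}. Granting this, substituting $v_c^{avg} = O((4/3)^D)$ into $TL\,3^D v_c^{avg}$ yields the claimed $O(TL\,4^D)$ term.
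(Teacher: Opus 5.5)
Your proof is correct and is essentially the paper's argument. The paper counts the pairs $(c_k, s)$ with $S_k^+ \subseteq s \subseteq S_k$ through a bijection with patterns in $\{1,2,3,4\}^D$ (positive; negated and in $s$; negated and not in $s$; absent), which is exactly the combinatorial reading of your product $\prod_{i=1}^{D}(1+1+2)=4^D$; your explicit absorption of the additive per-cube $O(D)$ bookkeeping (valid since $D \le 3(4/3)^D$) is a small refinement that the paper leaves implicit.
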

\begin{proof}
To compute the average complexity, we consider the total complexity across all $3^D$ cubes over the path variables $\{x_1, \dots, x_D\}$, where each variable $x_i$ may appear as a positive literal ($x_i \in S_k^+$), a negative literal ($x_i \in S_k^-$), or be absent ($x_i \notin S_k$). \emph{PDIVMetric} is evaluated exactly once for each cube in this $3^D$ family.

For any cube $c_k$ over $\{x_1, \dots, x_D\}$, \emph{PDIVMetric} visits all subsets $s$ such that $S^+_k \subseteq s \subseteq S_k$ and matches them to a value. Thus, each number computed by \emph{PDIVMetric} (one number in each loop iteration) can be uniquely described by the pair $(c_k, s)$. The total complexity is the number of such pairs generated over all cubes.  

We now show a one-to-one correspondence between all $(c_k, s)$ pairs and the set of patterns $p \in \{1,2,3,4\}^D$.
Given a pair $(c_k, s)$, construct a single pattern of length $D$ using the rules below:
\begin{enumerate}
    \item Set $p[i] = 1$ if $x_i \in S^+_k$
    \item Set $p[i] = 2$ if $(x_i \in S^-_k) \land (x_i \in s)$
    \item Set $p[i] = 3$ if $(x_i \in S^-_k) \land (x_i \notin s)$
    \item Set $p[i] = 4$ if $x_i \notin S_k$
\end{enumerate}

Note: Since we consider a path with no repeating features, each variable $x_i$ can appear only once in each cube. In particular, there is no $i$ such that $x_i \in S_k^{+}$ and $x_i \in S_k^{-}$.
Given a pattern $p$, construct a single pair $(c_k, s)$ using the rules below:
\begin{enumerate}
    \item Add $x_i$ to $S^+_k$ if $p[i] = 1$
    \item Add $x_i$ to $S^-_k$ if $p[i] = 2$ or $p[i] = 3$
    \item Add $x_i$ to $s$ if $p[i] = 1$ or $p[i] = 2$
\end{enumerate}

This proves the one-to-one correspondence, implying that $\{1,2,3,4\}^D$ and the set of $(c_k, s)$ pairs have the same size.
As the size of $\{1,2,3,4\}^D$ is $4^D$, the size of all the $(c_k, s)$ pairs is also $4^D$ and \emph{PDIVMetric} total complexity is $O(4^D)$. Thus, the average complexity on all $O(3^D)$ cubes is: 
\[
O(v_c^{avg}) = O\!\left(\left(\tfrac{4}{3}\right)^D\right).
\]
\end{proof}

A more refined analysis uses $O(v_c^{avg})=O((\frac{4}{3})^D)$ and yields an even sharper complexity:

\[
\text{\aopdiv (} C=B\text{)}: O(nTL2^D + TL4^D)
\]

The analysis above assumes that the consumer data $C$ is also used as the background data $B$. For a separate background dataset with $m$ rows, the complexity is:

\[
\text{\aopdiv (} C\neq B\text{)}:O(mTL + nTL2^D + TL4^D)
\]

\section{The Joint PDP Algorithm}
This section presents an efficient algorithm for computing joint partial dependence plots (Joint-PDPs). As introduced in Sect.~\ref{sec:joint_PDP}, we first construct the required consumer data (Alg~\ref{alg:joint_pdp_data}, visualized in Fig~\ref{fig:point_df_for_joint_pdp}), and then compute the corresponding Joint-PDP values using \algnamenew.

\subsection{Joint PDP Consumer Data Construction}
To construct the consumer data, we sample $k$ points for each feature and use Formula~\ref{eq:C_formation} (discussed in Sec.~\ref{sec:faster_pdp}). For any two features $f_a$ and $f_b$, let their sampled values be $\{a_1, \dots, a_k\}$ and $\{b_1, \dots, b_k\}$, respectively. The constructed dataset for Joint PDP must contain a row for every pair of sampled values $(a_i, b_j)$. A simple way to achieve this for the columns $f_a$ and $f_b$ is as follows:
\begin{itemize}
    \item Extend the $f_a$ column using $\textit{np.tile}(X[f_a], k)$. This repeats the entire column $k$ times, e.g. $[0,1,2] \rightarrow [0,1,2,0,1,2,0,1,2]$.
    \item Extend the $f_b$ column using $\textit{np.repeat}(X[f_b], k)$. This repeats each element $k$ times, e.g. $[0,1,2] \rightarrow [0,0,0,1,1,1,2,2,2]$.
\end{itemize}
The resulting dataset has size $O(k^2)$ and contains all possible $(a_i, b_j)$ pairs.

A straightforward way to extend this construction to multiple features is, for each feature $f_a$, to construct a data extending $f_a$ using $\textit{np.tile}(X[f_a], k)$ and extending every other feature $f_r$ using $\textit{np.repeat}(X[f_r], k)$. This process produces a dataset of size $k^2$ for each feature, leading to a total size of $k^2f$. While the constructed dataset achieves the required pairwise coverage, its size increases linearly with the number of features and scales poorly on datasets with large number of features.

Fortunately, a more efficient algorithm exists. Alg~\ref{alg:joint_pdp_data} constructs a dataset with the same coverage properties and size $k^2log_2(f)$. The algorithm encodes each feature index in binary and traverses its bits from most to least significant. For each bit, the column is extended using \emph{np.tile} if the bit is 0 and \emph{np.repeat} if the bit is 1. Fig.~\ref{fig:point_df_for_joint_pdp} illustrates the method on a small example.

Since any two feature indices differ in at least one bit, the constructed dataset always includes a continuous set of rows where one column uses \emph{np.tile} and the other uses \emph{np.repeat}, ensuring that all value pairs are represented.

\begin{algorithm}
\caption{Construct the Data For Joint-PDP}
\label{alg:joint_pdp_data}
\begin{algorithmic}[1]
\Function{ConstructJointPDPData}{$df$}
    \State $F = df.columns$
    \State $k = len(df)$
    \State $data = \{f_i: [] \: | \: \forall f_i \in F \}$
    \For{$f_i$ in $F$}:
        \For{$b$ in $bits(i, \lceil log_{2}( |F|) \rceil)$}: \label{line:bits_for}
            \If{$b = 0$}:
                \State $data[f_i]$.extends($\textit{np.tile}(df[f_i], k)$) \label{line:np_tile}
            \Else:
                \State $data[f_i]$.extends($\textit{np.repeat}(df[f_i], k)$) \label{line:np_repeat}
            \EndIf
        \EndFor
    \EndFor
    \State \Return pd.DataFrame($data$)
\EndFunction
\end{algorithmic}
\end{algorithm}

The \emph{bits}$(i, \lceil log_{2}( |F|) \rceil))$ function call in line~\ref{line:bits_for} returns the binary representation of $i$ starting from the most significant bit and ending with the least significant bit. The second argument specifies the minimum length of the representation, the function pads with zeros if needed to reach this length. For example, $bits(11, 6)$ returns $[0,0,1,0,1,1]$.

The Joint-PDVs can be computed using first and second order PDIVs (see Formula~\ref{eq:pair_PDV} and Sect.~\ref{sec:joint_PDP}). To compute the Joint-PDVs, we will run the \algnamenew PDIVs algorithm (see Sect.~\ref{sec:pdivs}) on the created data and get the first and second order PDIVs of all its rows.  

As size of the created data is $k^2log_2(f)$, \algnamenew will create $k^2log_2(f)$ Joint-PDVs for each pair of features (see Sect.~\ref{sec:joint_PDP}). However, many of these values are redundant. To keep only the required values we need to clip them, leaving only a part where one feature was extended using \textit{np.tile} and the other using \textit{np.repeat}. We do this for each feature pair by keeping only the part that corresponds to the first bit where their binary representations differ. Algorithm~\ref{alg:joint_pdp_clip} describes this clipping procedure. The algorithm takes as input the Joint-PDVs, the feature set $F$, and the number of sampled points $k$. It returns the clipped Joint-PDVs.

\begin{algorithm}
\caption{Clip the Return PDVs}
\label{alg:joint_pdp_clip}
\begin{algorithmic}[1]
\Function{ClipPDVs}{$pdvs, F, k$}
    \State $D = \lceil log_{2}( |F|) \rceil)$
    \For{$(f_i, f_j)$ in $pdvs$}:
        \State $h=0$
        \For{$b_i, b_j$ in $zip(bits(i, D), bits(j,D))$}: 
            \If{$b_i \neq b_j$}:
                \State $pdvs = pdvs[(f_i, f_j)][hk^2:(h+1)k^2]$
                \State \textbf{break}
            \EndIf
            \State $h = h + 1$
        \EndFor
    \EndFor
    \State \Return $pdvs$
\EndFunction
\end{algorithmic}
\end{algorithm}

\subsection{Joint-PDP Pseudo Code}

Alg.~\ref{alg:fast_joint_pdp} presents the complete Joint-PDP algorithm. It takes as input a decision tree ensemble $M$, a background dataset $B$, and a number $k$. The algorithm proceeds as follows:
\begin{enumerate}
    \item Sample \( k \) values for each feature (line~\ref{line:sample_k_points}).
    \item Build the Joint-PDP dataset using \emph{ConstructJointPDPData} (line~\ref{line:ConstructJointPDPData_call}).
    \item Compute all PDIVs of first and second order using \algnamenew (lines~\ref{line:PDIV_1_2_order_start}-\ref{line:PDIV_1_2_order_end}). This is done by invoking \algnamenew with the \textit{PDIVOrder1Or2} metric, whose implementation appears in Fig.~\ref{fig:PDIV_order_1_2}. 
    \item Compute $\PDIV_{\{\},\{\}}$, which corresponds to the average prediction over the background data (line~\ref{line:PDIV_empty_set}).
    \item Apply Formula~\ref{eq:pair_PDV} to compute Joint-PDVs (lines~\ref{line:joint_pdv_compute_start}-\ref{line:joint_pdv_compute_end}).
    \item Clip the resulting Joint-PDVs using \emph{ClipPDVs} and return the result (line~\ref{line:clip_pdvs}).
\end{enumerate}

\begin{algorithm}
\caption{Joint-PDPs}
\label{alg:fast_joint_pdp}
\begin{algorithmic}[1]
\Function{WJointPDPs}{$M, B, k$}
    \State Select $k$ possible values for each feature in $B$, and store them in $df$ (see Formula~\ref{eq:C_formation}). \label{line:sample_k_points}
    \State $C$ = \Call{ConstructJointPDPData}{$df$} \label{line:ConstructJointPDPData_call}
    \State $metric = \textit{PDIVOrder1Or2()}$ \label{line:PDIV_1_2_order_start}
    \State $PDIV$ = \Call{\algname}{$M, C, B, metric$} \label{line:PDIV_1_2_order_end}
    \State $PDIV[\emptyset] = M(B).\textit{mean}()$ \label{line:PDIV_empty_set}
    \State $PDV = \{\}$ \label{line:joint_pdv_compute_start}
    \For{$f_i$ in $F$}
        \For{$f_j$ in $F$}
            \If{$f_i \neq f_j$}
            
                \State $PDV[(f_i,f_j)]$ = $PDIV[(f_i, f_j)] + PDIV[(f_i)] + PDIV[(f_j)] + PDIV[\emptyset]$
            \EndIf
        \EndFor
    \EndFor \label{line:joint_pdv_compute_end}
    \State \Return \Call{ClipPDVs}{$PDV, B.columns, k$} \label{line:clip_pdvs}
\EndFunction
\end{algorithmic}
\end{algorithm}

    

\begin{figure}[h]
\centering
\includegraphics[width=0.99\linewidth]{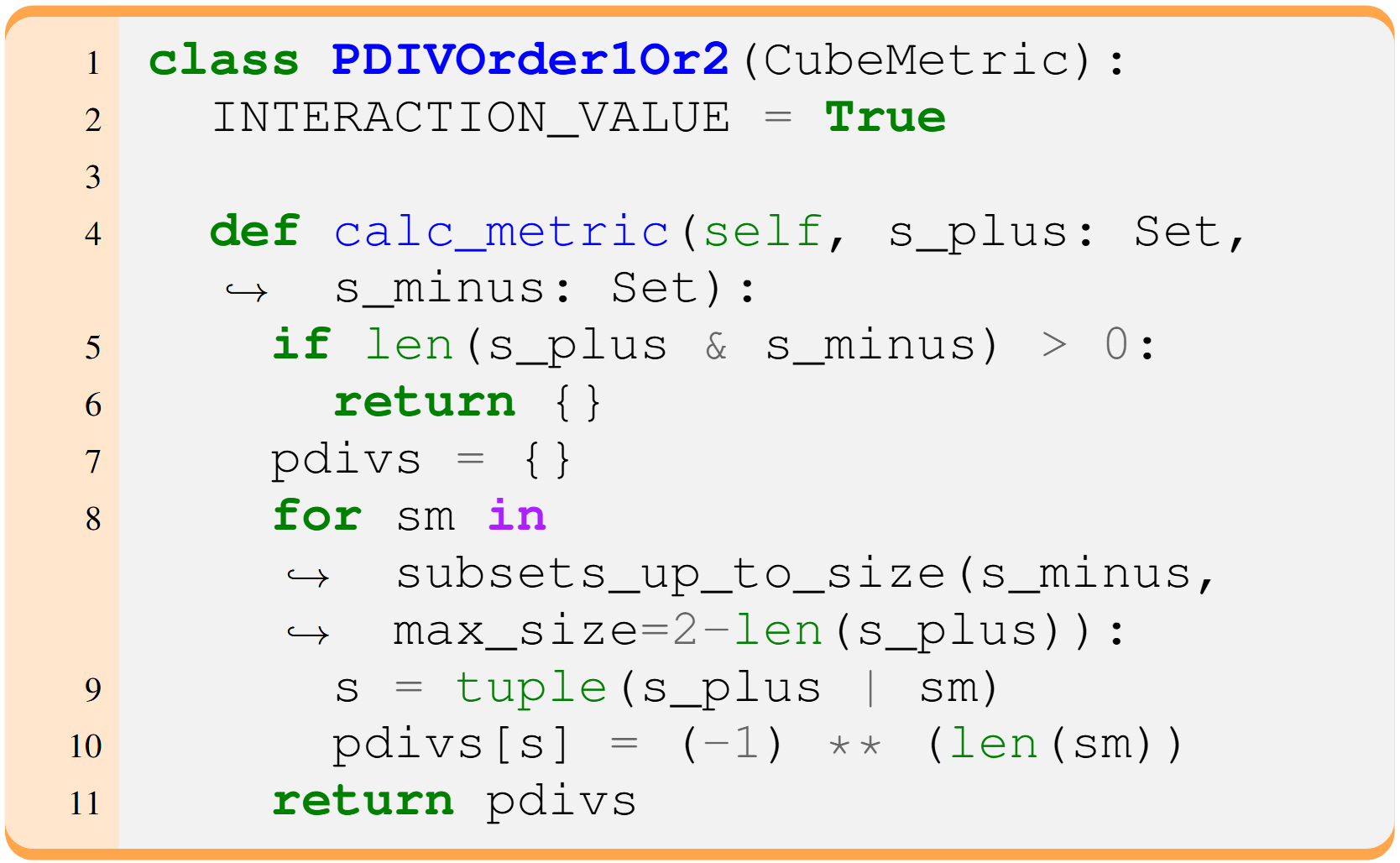}
\caption{
Python code for \emph{PDIVOrder1Or2}: a class that inherits \emph{CubeMetric} and computes all the PDIVs of first and second order for a single cube with weight 1. Its code closely follows that of \emph{PDIVMetric} (Fig.~\ref{fig:PDIV}). \texttt{subsets\_up\_to\_size} returns all subsets of a given set whose size is at most \textit{max\_size} (when \textit{max\_size} is negative, the function returns an empty list).}
\label{fig:PDIV_order_1_2}
\end{figure}

\begin{figure}[t]
    \centering
    \includegraphics[width=0.8\linewidth]{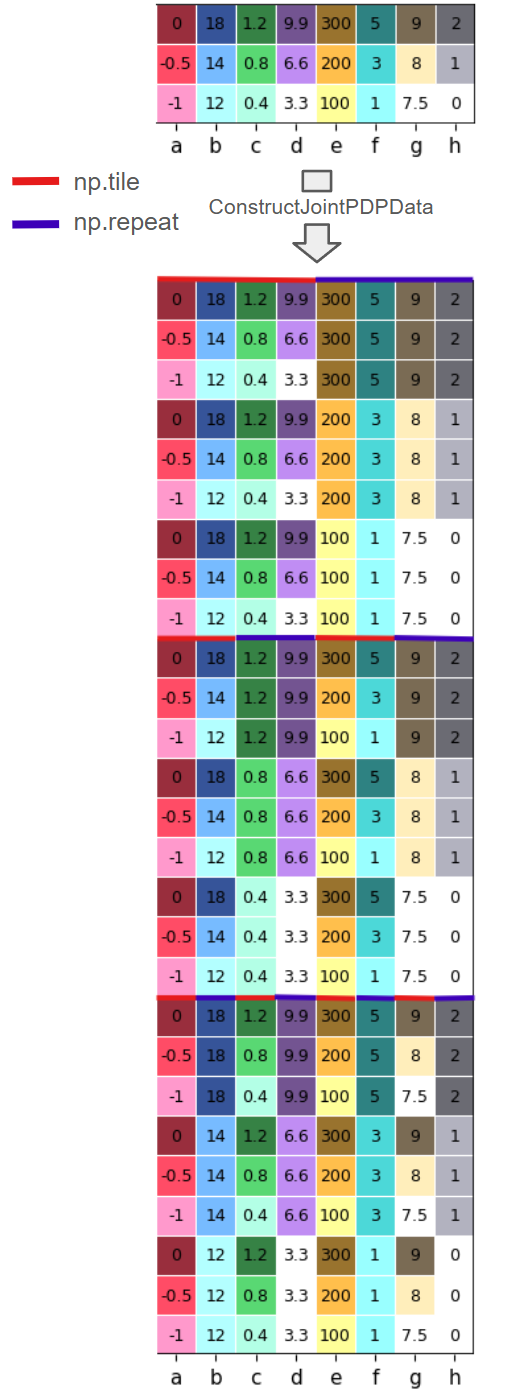}
    \caption{Consumer data for standard PDP (top) and the expanded dataset used for Joint-PDP (bottom). Columns are extended using \emph{np.tile} (red) or \emph{np.repeat} (blue), ensuring that all pairwise value combinations appear in the data. The standard PDP data has 3 rows and 8 columns, the Joint-PDP data has $3^2log_2(8)=27$ rows and 8 columns.}
    \label{fig:point_df_for_joint_pdp}
\end{figure}

\subsection{Joint-PDP Complexity Analysis}
\label{sec:joint_pdp_complexity}
We analyze the complexity of Alg.~\ref{alg:fast_joint_pdp} line by line:
\begin{enumerate}
    \item Line~\ref{line:sample_k_points} sample $k$ points for each feature in $B$, requiring $O(kf)$ time.
    \item Line~\ref{line:ConstructJointPDPData_call} has complexity $O(k^2 log(f))$, as \emph{ConstructJointPDPData} produces an output of size $k^2 log(f)$ and its complexity is linear with respect to its output size (Lines~\ref{line:np_tile} and~\ref{line:np_repeat} are the bottlenecks of Alg~\ref{alg:joint_pdp_data}).
    \item Line~\ref{line:PDIV_1_2_order_start} takes $O(1)$.
    \item In practice, line~\ref{line:PDIV_1_2_order_end} is the main bottleneck of the algorithm.
    To analyze the complexity of \algnamenew, we bound $O(v_s)$ and $O(v_c^{avg})$ when applying \emph{PDIVOrder1Or2} to all cubes over the variables $\{x_1,\dots,x_D\}$ (see Sect.~\ref{sec:woodelf_into} and Appendix~\ref{sec:any_order_pdiv_complexity} for details on these quantities).
    First, observe that any cube $c_k$ with more than two positive literals ($|S_k^{+}|>2$) contributes nothing and can be ignored.
    This leaves $O(2^D D^2)$ relevant cubes.
    
    We now show that the total cost of running \emph{PDIVOrder1Or2} on all these cubes is $O(2^D D^2)$.
    We partition the cubes into three groups and bound the total cost of each group:
    \begin{itemize}
        \item \textbf{No positive literals.}
        There are $O(2^D)$ such cubes.
        For each cube, \emph{PDIVOrder1Or2} iterates over all pairs of negated literals which costs $O(D^2)$ and over each negated literal individually which costs $O(D)$.
        The total cost is therefore $O(2^D D^2)$.
        
        \item \textbf{One positive literal.}
        There are $O(2^D D)$ such cubes.
        For each cube, \emph{PDIVOrder1Or2} iterates over all negated literals individually, with cost $O(D)$.
        The total cost is again $O(2^D D^2)$.
        
        \item \textbf{Two positive literals.}
        There are $O(2^D D^2)$ such cubes.
        In this case, \emph{PDIVOrder1Or2} runs in $O(1)$ time per cube, yielding a total cost of $O(2^D D^2)$.
    \end{itemize}
    
    Since \emph{PDIVOrder1Or2} computes PDIVs for individual features and feature pairs, it only produces values for singletons and pairs, giving $O(v_s)=O(D^2)$.
    Combining these bounds, letting $|B|=n$ and recalling that $|C| = k^2log(f)$, the overall complexity of line~\ref{line:PDIV_1_2_order_end} is $O(nTL + k^2log(f) \! \cdot \! TLD^2 + TL2^DD^2)$.

    \item Line~\ref{line:PDIV_empty_set} performs $n$ model predictions, taking $O(nTD)$ time.
    \item Lines~\ref{line:joint_pdv_compute_start}-\ref{line:joint_pdv_compute_end} iterate over all feature pairs and apply simple arithmetic on the \textit{PDIV} data of size $k^2log(f)$, resulting in $O(k^2 f^2 log(f))$ complexity.
    \item In line~\ref{line:clip_pdvs} the \emph{ClipPDVs} algorithm (Alg~\ref{alg:joint_pdp_clip}) clips the \textit{PDIV} data, creating a dataset with $k^2$ rows and $f^2$ columns in $O(k^2f^2)$ time.
\end{enumerate}

Combining the above line-by-line complexities yields a final complexity of:
\[
O(k^2f^2log(f) + nTL + k^2log(f)\! \cdot \!TLD^2 + TL2^DD^2)
\]

\section{Experimental Results}

This section provides technical details of our experiments, reports empirical correctness validations, and presents additional results on deeper trees and smaller datasets, complementing Sect.~\ref{sec:experimental_results}.

\begin{table*}[h!]
\begin{tabularx}{\textwidth}{r|X|r|r|r|r|r|r|r|r|r|r} 
\textbf{Dataset} & \textbf{Task} & \textbf{D=1} & \textbf{D=2} & \textbf{D=3} & \textbf{D=4} & \textbf{D=5} & \textbf{D=6} & \textbf{D=7} & \textbf{D=8} & \textbf{D=9} & \textbf{D=10} \\\hline 
 IEEE-CIS  &  Exact PDP k=5  &  3.3  &  3.6  &  4.3  &  5.4  &  7.2  &  11.1  &  17.7  &  31.0  &  60.2  &  142.1  \\ 
 IEEE-CIS  &  Exact PDP k=100  &  3.4  &  3.7  &  4.2  &  5.4  &  7.2  &  10.8  &  17.6  &  30.1  &  60.5  &  142.7  \\ 
 IEEE-CIS  &  Exact full PDP  &  0.9  &  1.3  &  2.0  &  3.1  &  5.0  &  8.5  &  15.7  &  27.9  &  58.9  &  141.2  \\ 
 IEEE-CIS  &  Exact Joint PDP k=5  &  4.0  &  4.4  &  5.2  &  6.3  &  8.6  &  12.9  &  23.3  &  47.8  &  135.1  &  445.7  \\ 
 IEEE-CIS  &  Approx. PDP k=5  &  3.2  &  3.7  &  4.3  &  5.4  &  7.3  &  10.7  &  17.9  &  29.8  &  60.7  &  144.0  \\ 
 IEEE-CIS  &  Approx. Joint PDP k=5  &  4.0  &  4.1  &  4.2  &  4.6  &  5.1  &  6.5  &  11.0  &  24.9  &  83.1  &  325.0  \\ 
  KDD Cup  &  Exact PDP k=5  &  10.3  &  13.7  &  19.9  &  29.5  &  55.0  &  81.8  &  120.8  &  182.2  &  312.6  &  502.5  \\ 
 KDD Cup  &  Exact PDP k=100  &  10.6  &  13.6  &  19.9  &  29.5  &  54.0  &  81.6  &  120.0  &  182.1  &  307.9  &  499.3  \\ 
 KDD Cup  &  Exact full PDP  &  6.4  &  9.9  &  15.7  &  25.5  &  49.8  &  77.7  &  116.4  &  179.2  &  303.7  &  495.8  \\ 
 KDD Cup  &  Exact Joint-PDP k=5  &  10.3  &  13.8  &  19.9  &  29.3  &  53.9  &  84.1  &  124.6  &  194.7  &  358.9  &  709.9  \\ 
 KDD Cup  &  Approx. PDP k=5  &  10.3  &  13.8  &  19.6  &  29.4  &  53.7  &  82.0  &  121.6  &  181.9  &  300.4  &  515.5  \\ 
 KDD Cup  &  Approx. Joint-PDP k=5  &  8.3  &  9.6  &  11.2  &  12.5  &  14.2  &  16.7  &  21.0  &  33.5  &  82.8  &  303.9  \\ 
\end{tabularx}
\caption{\label{higher_depth_table} We evaluated \algnamenew’s performance across different tree depths. A \emph{HistGradientBoostingRegressor} with 100 trees and no regularization (\textit{max\_leaf\_nodes}=\texttt{None}, \textit{min\_samples\_leaf}=1) is trained for each depth, and our algorithms are applied on the resulting ensembles. The experiments are conducted on a Google Colab high-RAM CPU environment. Each column in the table reports the running time for a specific depth (e.g., ``D=5'' corresponds to depth 5). All times are reported in seconds. As expected from the algorithm’s complexity, the running time approximately doubles with each increase in depth.}
\end{table*}

\subsection{Technical Experimental Details}
Below are additional technical details related to the experiments discussed in Sect.~\ref{sec:experimental_results}:
\begin{enumerate}
    \item \textbf{The effect of HistGradientBoostingRegressor on full PDP}: We used \sckl’s \emph{HistGradientBoostingRegressor}, which follows the histogram-based approach popularized by LightGBM~\cite{LightGBM}. All models were trained with 100 trees of depth 6. For efficiency, \emph{HistGradientBoostingRegressor} does not evaluate every possible split point, but instead considers up to 255 candidate thresholds per feature. This not only accelerates training but also reduces the complexity of computing full PDPs, since each feature can produce no more than 255 split values. As a result, the state of the art running times for full PDP and for PDP with k=5 (IEEE-CIS) or k=10 (KDD Cup) are comparable (see Table~\ref{performance_table}). An additional reason for this behavior is the large number of features in these datasets. In contrast, for datasets with fewer features or for models that do not rely on binned thresholds, computing the full PDP using the state-of-the-art approach can be substantially more expensive than computing a regular PDP. Our experiments on smaller datasets (see Table~\ref{small_datasets_preformance_table} in Sect.~\ref{sec:small_dataset_experiments}) highlight this difference.
    
    \item \textbf{PDIVs on all rows memory management}:  Storing the \aopdiv for all rows would have taken 125GB RAM on IEEE-CIS and 481GB RAM on KDD-Cup. To keep memory usage reasonable, we still compute the values for every row and all interaction orders, but only retain their averages in memory. The algorithm processes one tree at a time, averaging the results before moving on and keeping only the aggregated values.

    \item \textbf{Running Time Estimation}: Some of the reported running times for state-of-the-art approaches are estimated, see Sect.~\ref{sec:estimation} below for details.
\end{enumerate}

\subsection{Higher Depth}

Table~\ref{higher_depth_table} presents the running times of our algorithm at higher depths, showing that the runtime approximately doubles with each additional depth level.

We exclude the `\aopdiv' task from this table because the RAM requirements become prohibitively large at higher depths, even for a single tree. We report here the \aopdiv runs that were feasible within our available memory: for n=\num{10000} samples, runtime at depths 7, 8, and 9 reached 77 sec, 275 sec and 1242 sec on IEEE-CIS, and 105 sec, 202 sec and 532 sec on KDD-Cup, respectively.

\subsection{Smaller Datasets}
\label{sec:small_dataset_experiments}

For small-scale benchmarks, we use three standard datasets from \sckl: Diabetes (regression), Breast Cancer (binary classification), and California Housing (regression). These datasets are widely used in the literature and provide clean tabular settings for evaluating explanation methods. 

Details about these datasets are provided in Table~\ref{table_small_datasets}. Running time results are provided in Table~\ref{small_datasets_preformance_table}.

\begin{table}[h]
\centering
\scriptsize
\renewcommand{\arraystretch}{2.5}
\begin{tabular}{l|l|l|l}
\textbf{Name} & \textbf{Train Size} & \textbf{Test Size} & \textbf{Features} \\\hline
Diabetes & \num{353}  & \num{89} & \num{10} \\
Breast Cancer & \num{455} & \num{114} & \num{30} \\
California Housing & \num{16512} & \num{4128} & \num{8} \\
\end{tabular}
\caption{\label{table_small_datasets} Details of the small datasets used. We used 80\% of the data rows as a train set and 20\% as a test set. All our algorithms (PDP, Joint-PDP, \aopdiv) received the train set as an input.}
\end{table}

\begin{table}[h!]
\begin{tabularx}{\columnwidth}{l|X|r|r} 
\textbf{Dataset} & \textbf{Task} & \textbf{SOTA} & \textbf{\algnamenew}\\\hline 
 diabetes  &  Exact PDP k=5  &  0.25  &  0.39  \\ 
 diabetes  &  Exact PDP k=10  &  0.4  &  0.36  \\ 
 diabetes  &  Exact PDP k=100  &  2.73  &  0.4  \\ 
 diabetes  &  Exact full PDP  &  1.26  &  0.44  \\ 
 diabetes  &  Exact Joint-PDP k=5  &  3.53  &  0.72  \\ 
 diabetes  &  Approx. PDP k=5  &  0.02  &  0.4  \\ 
 diabetes  &  Approx. Joint-PDP k=5  &  0.1  &  0.52  \\ 
 diabetes  &  \aopdiv  &  0.3  &  0.92  \\ 
 cancer  &  Exact PDP k=5  &  0.67  &  0.4  \\ 
 cancer  &  Exact PDP k=10  &  1.34  &  0.39  \\ 
 cancer  &  Exact PDP k=100  &  12.19  &  0.42  \\ 
 cancer  &  Exact full PDP  &  3.06  &  0.45  \\ 
 cancer  &  Exact Joint-PDP k=5  &  46.03  &  0.7  \\ 
 cancer  &  Approx. PDP k=5  &  0.05  &  0.45  \\ 
 cancer  &  Approx. Joint-PDP k=5  &  1.05  &  0.49  \\ 
 cancer  &  \aopdiv  &  4.34  &  1.32  \\ 
 housing  &  Exact PDP k=5  &  2.29  &  1.7  \\ 
 housing  &  Exact PDP k=10  &  4.14  &  1.74  \\ 
 housing  &  Exact PDP k=100  &  38.63  &  1.64  \\ 
 housing  &  Exact full PDP  &  63.07  &  1.9  \\ 
 housing  &  Exact Joint-PDP k=5  &  36.84  &  3.03  \\ 
 housing  &  Approx. PDP k=5  &  0.03  &  1.72  \\ 
 housing  &  Approx. Joint-PDP k=5  &  0.12  &  1.81  \\ 
 housing  &  \aopdiv  &  34.98  &  6.35  \\ 
\end{tabularx}
\caption{\label{small_datasets_preformance_table} Performance comparison between the SOTA methods (\sckl for PDP and FastPD for PDIVs) and \algnamenew on additional, smaller and widely used datasets. For each dataset, a \emph{HistGradientBoostingRegressor} with 100 trees is trained and our algorithms are applied on the resulting ensembles. All experiments are executed on a Google Colab high-RAM CPU environment. All running times are reported in seconds.}
\end{table}

As shown in Table~\ref{small_datasets_preformance_table}, \algnamenew is faster than the SOTA in most tasks, including exact PDP (particularly when the number of plotted points $k$ is large), exact full PDP, and exact Joint-PDPs. However, when approximation is used, the SOTA can be faster on small datasets.

\subsection{SOTA Running Time Estimation}
\label{sec:estimation}
In this section, we describe how we estimated the SOTA running times reported in Table~\ref{performance_table}. We estimated the running times for the following tasks: `Exact PDP k=100', `Exact Joint-PDP k=5', `\aopdiv n=10,000' and `\aopdiv'. The methodology for each estimated value is explained below:
\paragraph{Exact PDP k=100} The SOTA runtime grows linearly with the number of sampled points $k$. This trend is clear from the actual (non-estimated) measurements of `Exact PDP k=5' and `Exact PDP k=10'. Therefore, to estimate `Exact PDP k=100' we take the measured runtime of `Exact PDP k=10' and multiply it by \num{10}.
\paragraph{Exact Joint-PDP k=5} The SOTA runtime for Joint-PDP grows linearly with the number of feature pairs. To estimate the running time of `Exact Joint-PDP k=5', we proceed as follows:
\begin{enumerate}
    \item We run the SOTA implementation on 10 feature pairs with $k=5$.
    \item We divide the total runtime by 10 to obtain the average runtime per pair.
    \item We multiply this average by the total number of feature pairs, which is $\frac{f(f-1)}{2}$.
\end{enumerate}
Equivalently, the total estimate is:
\[
\frac{running\_time\_on\_10\_pairs}{10} \cdot \frac{f(f-1)}{2} 
\]
In our runs, the measured time for 10 pairs was 382 seconds on the IEEE-CIS dataset and 1030 seconds on the KDD-Cup dataset.
\paragraph{\aopdiv} In both `\aopdiv n=10,000' and `\aopdiv', we used the same estimation methodology. 

First, we examined how FastPD scales with the number of participating features. We trained XGboost models with 3 to 9 features, 100 trees and depth 6 on a dataset containing \num{311029} rows and measured the runtime of FastPD. The observed running times were 48, 120, 223, 368, 742, 1217, and 2036 seconds. The runtime roughly doubled with each additional feature. This matches the theoretical FastPD complexity of $O(2^F nT)$ - where $F$ is the number of features used in a tree. 

Next, we used the 9-feature run and FastPD complexity to estimate $c$ - the cost of a single FastPD operation. With depth-6 trees, it is likely that all trees use all 9 features. This assumption is optimistic—if some trees use fewer features, the fitted value of $c$ underestimates the true operation cost, which in turn leads to an underestimated total running time. When using 9 features, FastPD required 2036 seconds on $n=311029$ rows and $T=100$ trees, leading to:
\[
2036 = 2^{9} \cdot 311029 \cdot 100 \cdot c
\]
which yields
\[
c = 8.67 \cdot 10^{-8} \text{ sec}
\]
The fitted constant $c$ corresponds to roughly 90 nanoseconds per operation, which is in line with expectations: on modern CPUs, each low-level numerical computation typically requires several nanoseconds to execute, and each “operation’’ in the complexity formula represents a small bundle of constant-time CPU steps.

To estimate FastPD runtime for the full tasks, we substituted the task parameters into the $O(2^F nT)$ complexity. The exponential term $2^FT$ was computed precisely by summing $2^{F_{T_i}}$ over all trees, where ${F_{T_i}}$ is the number of participating features in tree $T_i$:
\[
total\_exponential\_time = \sum_{T_i \in M} 2^{F_{T_i}}
\]

The resulting totals were \num{2199096386519040} for IEEE-CIS and  \num{1490305024} for KDD Cup. The much smaller value for KDD-Cup is likely due its fewer features and significant one-hot encoding. Finally, we multiplied $c \cdot n \cdot total\_exponential\_time$ to obtain the runtime estimates:
\begin{enumerate}
    \item `\aopdiv' IEEE-CIS (n=\num{472432}) is: 
    
    $c \cdot 2199096386519040 \cdot 472432 \approx 2854863$ years
    \item `\aopdiv n=10,000' IEEE-CIS is: 
    
    $c \cdot 2199096386519040 \cdot 10000 \approx 60429$ years
    \item `\aopdiv' KDD Cup (n=\num{4898431}) is: 
    
    $c \cdot 1490305024 \cdot 4898431 \approx 20$ years
    \item `\aopdiv n=10,000' KDD Cup is: 
    
    $c \cdot 1490305024 \cdot 10000 \approx 15$ days
    
\end{enumerate}

\subsection{Empirical Correctness Verification}

To validate the correctness of our approaches, we compared the PDV k=5 computed by \algnamenew (both with exact and approximate approaches) to those generated by \sckl. This comparison was carried out on the first \num{1000} rows of the IEEE-CIS train dataset. The results showed strong agreement, with all values differing by at most 0.00001. Note: The approximate PDV values of \sckl are centralized while the exact PDV values are not, \algnamenew supports both options.

Similar comparisons were done on 10 randomly chosen feature pairs to validate Joint-PDP correctness. The results are the same: all values differing by at most 0.00001.

Due to the exponential complexity of the state of the art, \aopdiv computation was validated on a smaller synthetic dataset with \num{10} features and \num{10000} rows. We implemented a direct PDIVs computation using Formula~\ref{PDIV_formula} and compared its output to our \algnamenew approach. On the first \num{100} rows when using the entire \num{10000} rows as a background data, the results matched closely, with all differences below 0.00001. 

We also demonstrated that approximate PDP is indeed an approximation by comparing its results to \sckl’s `brute' (exact) method and observing clear differences between the two.

\section{Computational Complexity of FastPD}
\label{app:complexity}

\cite{FastPD} show that the complexity of \texttt{FastPD} is
$O(T2^{D+F}(n_e + n_b))$, where $F$ is the number of
distinct split features in the tree, $n_e$ is the number of evaluation points,
and $n_b$ is the number of background samples.

When a maximum interaction degree $s \le F$ is imposed, only the
$\sum_{i=0}^{s}\binom{F}{i}$ subsets of size at most $s$
are processed in the evaluation step, reducing the dominant per-tree cost to
\begin{equation}
  \label{eq:complexity_s}
  O\!\left(T \cdot 2^D \cdot \sum_{i=0}^{s}\binom{F}{i} \cdot (n_e + n_b)\right).
\end{equation}
For fixed $D$, $T$, $n_b$, and $n_e$, this grows as $F^s/s!$ in $F$, so a
log--log plot of runtime against $F$ should have slope~$s$ (as $\log(F^s) = s \cdot \log F + \text{const}$).

\paragraph{Empirical validation.}
We trained XGBoost ensembles ($D=6$, $T=20$ trees) on synthetic data
designed so that all $F$ features are actively used in each tree.
We swept $F$ for three interaction orders and recorded wall-clock time
(three repetitions each):

\begin{table}[h]
  \centering
  \begin{tabular}{ccc}
    \toprule
    $s$ & $F$ range & Expected log--log slope \\
    \midrule
    $1$   & $3$--$35$ & $\approx 1$ \\
    $2$   & $3$--$28$ & $\approx 2$ \\
    $D=6$ & $3$--$12$ & $\approx 6$ \\
    \bottomrule
  \end{tabular}
  \caption{Benchmark configurations for each interaction order $s$.
           $F$ is the mean number of distinct split features per tree,
           measured from the fitted model.
           The expected slope follows from the $F^s/s!$ growth of
           $\sum_{i=0}^{s}\binom{F}{i}$.}
  \label{tab:complexity_benchmark}
\end{table}

The fitted log--log slopes matched the predictions in all three cases,
confirming Formula~\ref{eq:complexity_s}.
A complementary experiment fixed $F\approx10$ and swept $s=1,\ldots,D$
on a single model; the runtime tracked $\sum_{i=0}^{s}\binom{F}{i}$ closely across all orders.


\end{document}